\newcommand\resetchangescolor[1]{%
  \setkeys{Changes@definechangesauthor}{color=#1}%
  \expandafter%
  \let\csname Changes@AuthorColor\endcsname=\Changes@definechangesauthor@color%
  \colorlet{Changes@Color}{\@nameuse{Changes@AuthorColor}}%
}
\theoremstyle{plain}
\newtheorem{theorem}{Theorem}[section]
\newtheorem{proposition}[theorem]{Proposition}
\newtheorem{corollary}[theorem]{Corollary}
\theoremstyle{definition}
\newtheorem{definition}[theorem]{Definition}
\newtheorem{problem}[theorem]{Problem}
\newtheorem{assumption}[theorem]{Assumption}
\theoremstyle{remark}
\declaretheoremstyle[
headfont=\normalfont\itshape,
qed=\qedsymbol,
]{mypf}
\definecolor{expert}{HTML}{008000}
\definecolor{error}{HTML}{f96565}
\definecolor{learner}{HTML}{F79646}
\newcommand{\tikzAngleOfLine}{\tikz@AngleOfLine}
\def\tikz@AngleOfLine(#1)(#2)#3{%
\pgfmathanglebetweenpoints{%
\pgfpointanchor{#1}{center}}{%
\pgfpointanchor{#2}{center}}
\pgfmathsetmacro{#3}{\pgfmathresult}%
}
\newcommand{\figref}[1]{Fig. \ref{#1}}
\title{Sequence Model Imitation Learning \\ with Unobserved Contexts}
\author{%
  Gokul Swamy \\
  Carnegie Mellon University\\
  \texttt{gswamy@cmu.edu} \\
   \And
  Sanjiban Choudhury \\
  Cornell University\\
  \texttt{sanjibanc@cornell.edu} \\
  \And
   J. Andrew Bagnell \\
  Aurora Innovation and Carnegie Mellon University\\
  \texttt{dbagnell@ri.cmu.edu} \\
  \And
  Zhiwei Steven Wu \\
  Carnegie Mellon University\\
  \texttt{zstevenwu@cmu.edu} \\
}
\begin{document}
\resetchangescolor{black}

\maketitle

\begin{abstract}
We consider imitation learning problems where the learner's ability to mimic the expert increases throughout the course of an episode as more information is revealed. One example of this is when the expert has access to privileged information: while the learner might not be able to accurately reproduce expert behavior early on in an episode, by considering the entire history of states and actions, they might be able to eventually identify the hidden context and act as the expert would. We prove that on-policy imitation learning algorithms (with or without access to a queryable expert) are better equipped to handle these sorts of \textit{asymptotically realizable} problems than off-policy methods. This is because on-policy algorithms provably learn to recover from their initially suboptimal actions, while off-policy methods treat their suboptimal past actions as though they came from the expert. This often manifests as a \textit{latching} behavior: a naive repetition of past actions. We conduct experiments in a toy bandit domain that show that there exist sharp \textit{phase transitions} of whether off-policy approaches are able to match expert performance asymptotically, in contrast to the uniformly good performance of on-policy approaches. We demonstrate that on several continuous control tasks, on-policy approaches are able to use history to identify the context while off-policy approaches actually perform \textit{worse} when given access to history.

\end{abstract}


\section{Introduction}


An unstated assumption in much of the work in imitation learning (IL) is that the learner and the expert have access to the same state information. With powerful enough statistical models, this assumption places us in the \textit{realizable} setting -- i.e. the imitator can actually behave like the expert. In practice however, an expert might have more information than the learner does. For example, an expert policy might be trained in simulation with privileged access to state before being used to supervise a learner policy that only has access to a subset of features. This recipe  has enjoyed success in domains from motion planning with obstacles~\citep{choudhury2018data}, to autonomous driving \citep{cheating}, to legged locomotion \citep{lee2020learning, rma}. 

Recent theoretical work has established ``no-go'' results for successfully imitating an expert that has access to more information \citep{zhang2020causal, kumor2021sequential}. The core of their arguments is that without seeing some feature that influences expert behavior but is not echoed elsewhere in the state, the learner might not be able to properly ground the expert actions in the observed state. In causal inference terms, this hidden information acts as an \textit{unobserved confounder} which prevents identification of the desired causal estimand (the expert action). Despite these results, impressive empirical successes have been achieved even when the learner has a more impoverished state representation than the expert.


In this work, we reconcile theory and practice by considering a broad class of problems where the learner's ability to mimic expert actions increases as more observations are revealed. We study the large-horizon limit to tease out what is key to good performance. We find that off-policy approaches (e.g. behavioral cloning) that ignore the resulting covariate shift from initially sub-optimal decisions can lead to poor results, even when there exists a policy that in the large-horizon limit is optimal. We show that for some problem families, there exists a sharp \textit{phase transition} in problem parameters where off-policy IL shifts between being consistent to having arbitrarily poor performance. 

In contrast, we show that on-policy approaches that leverage interaction with the demonstrator \citep{dagger} or take advantage of interaction with the environment (in the style of Inverse Optimal Control \citep{bagnell2015invitation}) \citep{ziebart2008maximum, ho2016generative} are always (i.e. independent of parameters) asymptotically consistent on these problems. We believe this strong separation between on- and off- policy approaches helps explain both the poor performance of behavioral cloning even in regimes with large data and powerful model classes \citep{alice, muller2006off, codevilla, de2019causal, chaeuffernet, kuefler2017imitating} and the success of on-policy methods mentioned above.

We note that, in contrast to the hidden state that is common in real-world problems \citep{boots2011closing, rma, lee2020learning}, standard benchmarks like the PyBullet suite \citep{coumans2019} are fully observed, enabling off-policy algorithms like behavioral cloning to match expert performance \citep{swamy2021moments}. Thus, for our experiments, we introduce partial observability to ensure that we are focused on part of what makes imitation learning hard in practice.


We study in detail Contextual Markov Decision Process (CMDPs) \citep{cmdp} that satisfy an \textit{asymptotic realizability} condition. Intuitively, this means we can expect that proper utilization of history to eventually enable accurate prediction of the context. A key result we show is that \textbf{\textit{for identifiable CMDPs where the learner can recover from mistakes early on in an episode, on-policy imitation learning algorithms that operate in the space of histories are able to asymptotically match time-averaged expert value, while off-policy approaches struggle to do so.}}

More concretely, our work makes three contributions: 
\begin{enumerate}
    \item We show that under appropriate identifiability and recoverability conditions, the context-dependent expert policy becomes \textit{asymptotically realizable}, enabling on-policy imitation learning algorithms to match (or nearly match) time-averaged expert performance. 
    \item \added{More generally, we show that when longer history allows the learner to get closer to realizing the expert policy, on-policy methods are able to take advantage of this property while off-policy methods are stuck with the consequences of their mistakes early on in an episode. This manifests as off-policy methods producing policies that merely repeat previous actions.}
    \item We conduct experiments in a simplified bandit domain which show that there exist sharp \textit{phase transitions} in terms of when off-policy imitation learning algorithms match expert performance in contrast to the uniform value-equivalence of the policies produced by on-policy approaches. We also conduct experiments on continuous control tasks that show that on-policy algorithms are able to take advantage of history to correctly identify the context in a way that off-policy methods are not.
\end{enumerate}

We begin with a discussion of related work.
\section{Related Work}
One of the fundamental challenges of imitation learning (or any sequential prediction task where the learner consumes some function of its own prior predictions) is the likelihood of significant covariate shift between training-time data and test-time observations \citep{ross2010efficient}. In short, this happens because the learner might end up in states not seen in the demonstrations and is thus unsure how to act. 
Early work in this area includes that of \citet{daume2009search} in the natural language processing and that of \citet{dagger} in imitation learning and robotics, both of which come to the preceding conclusion. 
Recent work by \citet{alice} shows that there are actually more than one regime of covariate shift. In the ``easy'' regime where the expert is realizable, off-policy methods like behavior cloning match expert performance when data and model capacity are large enough. However, in harder regimes where the expert is non-realizable due to model misspecification, off-policy methods compound in error and one must rely on on-policy methods, either those that require an interactive expert~\citep{dagger} or an interactive simulator~\citep{ziebart2008maximum, swamy2021moments}. 

One instance of model misspecification of practical interest is when the learner is denied state information that the expert uses. For instance, in self-driving, the human expert might have richer context about the scene than the limited perception system of the car. While the standard solution is to add a history of past states and actions to the model, practitioners have often noted that this leads to a ``latching effect'' where the learner simply repeats the past action. For example, ~\cite{muller2006off} note such latching with steering actions, ~\cite{kuefler2017imitating,chaeuffernet} note this with braking actions, and ~\cite{codevilla} with accelerations. Recent work by \cite{ortega2021shaking} also points out this latching behavior which they term as ``self-delusion.''


Once one identifies the downstream effect of missing context as covariate shift, a natural question is whether an extension of prior covariate-shift-robust imitation learning methods to the space of histories would be able to learn effectively in partial information settings. Prior work has answered parts of this question. For example, \cite{choudhury2018data} proves that interactive imitation learning over the space of histories converges to the QMDP approximation of the expert's policy \citep{littman1995learning}.
Recent work \citet{ortega2021shaking} views these kind of partial information setting through a causal lens \citep{pearl2016causal}, and provides an algorithm (counterfactual teaching) equivalent to the interactive-expert FORWARD algorithm of \citet{ross2010efficient} under log-loss.
We build upon these analyses by providing conditions under which such approaches will converge to a policy that is equivalent in value to that of the expert in the presence of unobserved contexts.

Our focus on the contextual MDP \citep{cmdp} is because, as argued by \citep{zhang2020causal, kumor2021sequential}, context that is updated throughout the episode and is only reflected for a single step prevents the learner from refining its estimates via considering history. 
Our results apply to settings beyond the CMDP where a similar asymptotic realizability property holds. \citet{tennenholtz2021covariate} consider IL in contextual MDPs but give the learner access to the confounder at test time and do not consider policies that operate over the space of histories. \cite{de2019causal} also consider imitation learning from a causal inference perspective but focus on issues of covariate shift \citep{alice}, rather than those of missing information. \citep{swamy2022causal} also consider unobserved confounders in IL but focus on correlated action perturbations rather than partial observability.

\citet{wen2020fighting, wen2021keyframe, wen2022fighting, chuang2022resolving} propose various \textit{offline} approaches to mitigating the latching effect (which they term the copycat problem). However, as we prove below, \textit{online} interaction with the environment is both necessary and sufficient to prevent compounding errors and the associated latching effect. It is therefore difficult to provide strong theoretical guarantees for their work or argue that their results in simulation would necessarily transfer to real-world problems.

\section{The Latching Effect in Off-Policy Imitation Learning}
Consider a finite-horizon Contextual Markov Decision Process (CMDP) parameterized by $\langle \mathcal{S}, \mathcal{A}, \mathcal{C}, \mathcal{T}, r, T \rangle$ where $\mathcal{S}$, $\mathcal{A}$, $\mathcal{C}$ are the state, action, and context spaces, $\mathcal{T}: \mathcal{S} \times \mathcal{A} \times \mathcal{C} \rightarrow \Delta(\mathcal{S})$ is the transition operator, $r: \mathcal{S} \times \mathcal{A} \times \mathcal{C} \rightarrow [-1, 1]$ is the reward function, and $T$ is the horizon. At the beginning of each episode, a context is sampled from $p(c)$ and is held fixed until the next reset.  Intuitively, there exists a family of reward and transition functions that are indexed by the context for each episode.  We use $h_t \in \mathcal{H}$ to denote a $t$-step history: ($s_1, a_1, \dots, s_t)$. We see trajectories generated by an expert policy $\pi^E: \mathcal{S} \times \mathcal{C} \rightarrow \Delta(\mathcal{A})$ and search over time-varying policies $\pi: \mathcal{H} \rightarrow \Delta(\mathcal{A})$. We use $\pi_{1 \dots t}$ to refer to the sequence of policies that comprise this time-varying policy. We assume that our policy class $\Pi$ is convex and compact.

We begin with a toy example of the \textit{latching effect} and how it leads to poor policy performance.
\begin{problem}[Causal Bandit Problem, \citep{ortega2021shaking}] Consider an episodic MDP with $K$ actions (arms) and a single state. At the beginning of each episode, a context $c \in [K]$ is chosen uniformly at random and represents the correct arm for that episode. Pulling an arm leads to binary feedback: $+$ if the arm was correct and $-$ otherwise. This feedback is flipped with probability $\epsilon_{obs} \in [0, 1]$. The learner observes expert demonstrations where at each timestep, the expert plays the correct arm with probability $1 - \epsilon_{exp} \in [0, 1]$ and another arm uniformly at random otherwise. The reward function is $1$ for pulling the correct arm and $0$ otherwise. We emphasize that the learner does not observe the rewards, just noisy binary feedback as an observation after each pull.
\end{problem}
As the learner does not observe the correct arm, we are in the partial information setting. However, as one might expect, by pulling all arms enough times and observing the noisy feedback, the learner can narrow down which arm they should pull for the rest of the episode. Note that the learner stays in the same state after each action so they are free to perform this exploration without long-term consequences.

\begin{figure}[t]
    \centering
    \includegraphics[width=0.32\textwidth]{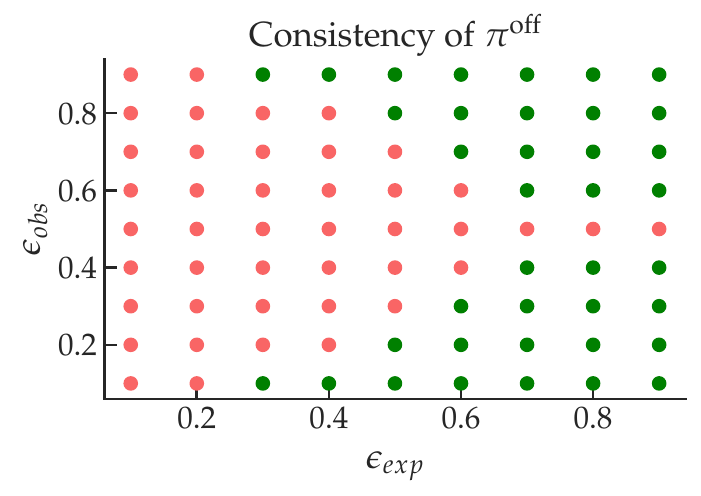}
    \includegraphics[width=0.32\textwidth]{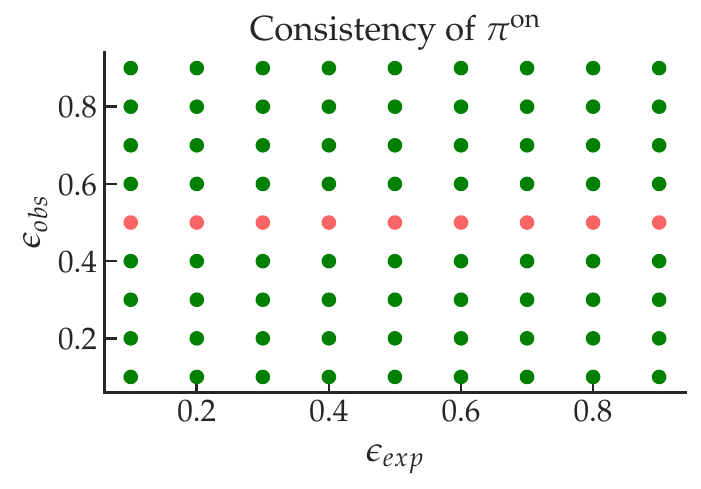}
    \includegraphics[width=0.32\textwidth]{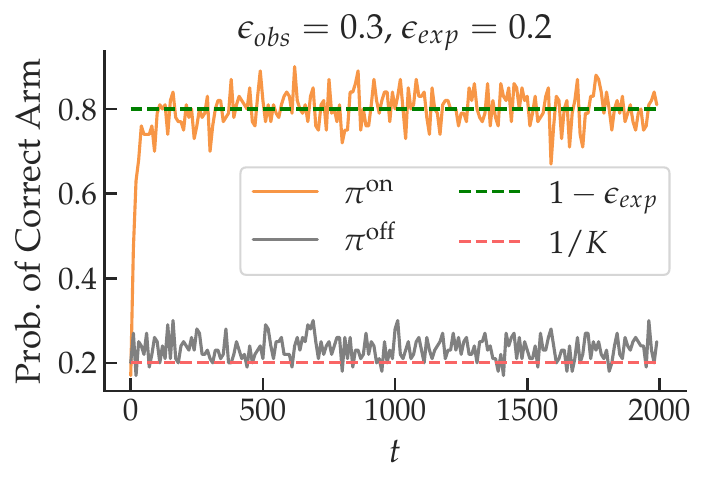}
    \caption{We plot whether the learner's policy has a higher probability of making mistakes than the expert ($\epsilon_{eps}$) after 2000 steps, averaged across 100 trials on instances with $K=5$. We use red dots to indicate when this is true and green dots otherwise. We see the on-policy method match expert performance everywhere the problem is identifiable in contrast to the off-policy method, for which a slight perturbation of a parameter can lead to a drastically different result in terms of long-term performance. On a particular problem setting, we see the off-policy method pick a random arm and repeat it ad infinitum and therefore perform at the level of random chance while the on-policy method is able to match expert performance.}
    \label{fig:bandit}
\end{figure}

\noindent \textbf{Observation 1: Off-Policy Methods Have Consistency Phase Transitions.} In \figref{fig:bandit}, we plot, for a variety of settings of the problem parameters ($\epsilon_{exp}$, $\epsilon_{obs}$) whether the learner makes mistakes with a significantly higher probability than the expert does for an off-policy IL algorithm and an on-policy IL algorithm. We give all learners access to the full history of interactions (i.e. arms pulled and noisy feedback observed). With exactly $\epsilon_{obs} = 0.5$, the learner gets no information from the observations, preventing any algorithm from learning properly. This means that under this particular setting, the problem instance is not \textit{identifiable}, a concept we develop further below. In contrast to the uniform consistency of the on-policy approach whenever the problem is identifiable, we see sharp \textit{phase transitions} in terms of where the off-policy approach is consistent -- a small change to either problem parameter can lead to a drastically different result.

\added{\noindent \textbf{Observation 2: Off-Policy Methods Produce Latching Policies.} At the first timestep, all learners pick an arm uniformly at random as they have no information about the context. In the cases where the off-policy approach is inconsistent, it continues to pick this arm \textit{ad infinitum} as it treats its own past actions as though they were the expert's. This is what leads to the $\frac{1}{K} = \frac{1}{5}$ success rate seen in the rightmost part of \figref{fig:bandit}, even after 2000 timesteps of experience telling the learner that another arm should be pulled. Put differently, the off-policy learner collapses its uncertainty over the correct arm too quickly for the negative feedback it receives to push it to a different arm. Concerningly, this effect appears to be extremely sensitive to the parameters of the problem, rendering it difficult to predict and hedge against.} 

\added{\noindent \textbf{Observation 3: Low Density Ratios Help Off-Policy Methods.} Looking at the left side of \figref{fig:bandit}, a natural question might be why, as we increase $\epsilon_{exp}$, the off-policy approach is consistent for a wider spectrum of $\epsilon_{obs}$ values. Observe that as we increase $\epsilon_{exp}$, the expert has a higher chance of making a mistake, bringing its trajectory distribution closer to that of the uniform policy and lowering the maximum density ratio between learner and expert trajectory distributions. As was established by \citet{alice}, a low maximum density ratio is a sufficient condition for behavioral cloning to be consistent. This experiment appears to echo their theoretical results.}

Putting together \figref{fig:bandit}, on-policy approaches appear to be able to handle hidden context given access to history while off-policy can do so only in a way that depends heavily on problem parameters. 

\section{A Bayesian Perspective on the Latching Effect}
To begin to explain this difference in behavior of on-policy and off-policy algorithms, we consider the structural causal model (SCM) each of these classes of algorithms is implicitly assuming.
\subsection{SCMs for Imitation Learning}
We observe that expert trajectories of length $t$ are generated according to
\begin{equation}
     p(\tau; \pi^E) \triangleq p(c)p(s_1)\prod_{i=1}^{t-1} \pi^E(a_i|c, s_i) \mathcal{T}(s_{i+1}|s_{i}, a_{i}, c),
\end{equation}
while learner trajectories are generated according to
\begin{equation}
     p(\tau; \pi) \triangleq p(c)p(s_1)\prod_{i=1}^{t-1} \pi_t(a_i|h_i) \mathcal{T}(s_{i+1}|s_{i}, a_{i}, c).
\end{equation}
We use $\tau \sim \pi^E$ and $\tau \sim \pi$ to denote $T$-step trajectories sampled according to the above distributions. \added{We define our value and Q functions as usual: $V^{\pi}(s) = \mathop{{}\mathbb{E}}_{\tau \sim \pi | s_t=s}[\sum_{t'=1}^T r(s_{t'}, a_{t'})]$, $Q^{\pi}(s, a) = \mathop{{}\mathbb{E}}_{\tau \sim \pi | s_t=s, a_t=a}[\sum_{t'=1}^T r(s_{t'}, a_{t'})]$. Lastly, let performance be $J(\pi) = \mathop{{}\mathbb{E}}_{\tau \sim \pi}[\sum_{t=1}^T r(s_t, a_t)]$.}
\begin{figure}[t]
\begin{subfigure}[b]{0.3\textwidth}
    \centering
       \begin{tikzpicture}[scale=0.8, transform shape]
    \node (a) [draw, very thick, circle] at (0.0, 0) {$s_1$};
    \node (b) [draw, very thick, circle] at (1.5, 0) {$s_2$};
    \node (c) [draw, very thick, circle]  at (3, 0) {$s_3$};
    \node (d) [draw, very thick, circle] at (0.0, -1.5) {$a_1$};
    \node (e) [draw, very thick, circle] at (1.5, -1.5) {$a_2$};
    \node (f) [draw, very thick, circle]  at (3, -1.5) {$a_3$};
    \node (g) [draw, very thick, circle]  at (1.5, -2.5) {$c$};

    \draw [->, very thick, color=black] (a) to (b);
    \path[->] (a) to node[midway, below, color=black] {$\mathcal{T}$} (b);
    \draw [->, very thick, color=black] (b) to (c);
    \path[->] (b) to node[midway, below, color=black] {$\mathcal{T}$} (c);
    
    \draw [->, very thick, color=black] (d) to (b);
    \draw [->, very thick, color=black] (e) to (c);
    
    \draw [->, very thick, color=expert] (a) to (d);
    \path[->] (a) to node[midway, left, color=expert] {$\pi^E$} (d);
    \draw [->, very thick, color=expert] (b) to (e);
    \path[->] (b) to node[midway, left, color=expert] {$\pi^E$} (e);
    \draw [->, very thick, color=expert] (c) to (f);
    \path[->] (c) to node[midway, left, color=expert] {$\pi^E$} (f);
    
    \draw [->, very thick, color=black, bend right] (g) to (b);
    \draw [->, very thick, color=black, bend right=90, looseness=1.2] (g) to (c);

    \draw [->, very thick, color=expert] (g) to (d);
    \path[->] (g) to node[midway, below, color=expert] {$\pi^E$} (d);
    \draw [->, very thick, color=expert] (g) to (e);
    \draw [->, very thick, color=expert] (g) to (f);
    \path[->] (g) to node[midway, below, color=expert] {$\pi^E$} (f);
    \end{tikzpicture}
    \caption{$\tau \sim \pi^E$}
\end{subfigure}
    \quad
\begin{subfigure}[b]{0.3\textwidth}

    \begin{tikzpicture}[scale=0.8, transform shape]
    \node (a) [draw, very thick, circle] at (0.0, 0) {$h_1$};
    \node (b) [draw, very thick, circle] at (1.5, 0) {$h_2$};
    \node (c) [draw, very thick, circle]  at (3, 0) {$h_3$};
    \node (d) [draw, very thick, circle] at (0.0, -1.5) {$a_1$};
    \node (e) [draw, very thick, circle] at (1.5, -1.5) {$a_2$};
    \node (f) [draw, very thick, circle]  at (3, -1.5) {$a_3$};
    \node (g) [draw, very thick, circle]  at (1.5, -2.5) {$c$};

    \draw [->, very thick, color=black] (a) to (b);
    \path[->] (a) to node[midway, below, color=black] {$\mathcal{T}$} (b);
    \draw [->, very thick, color=black] (b) to (c);
    \path[->] (b) to node[midway, below, color=black] {$\mathcal{T}$} (c);
    
    \draw [->, very thick, color=black] (d) to (b);
    \draw [->, very thick, color=black] (e) to (c);
    
    \draw [->, very thick, color=learner] (a) to (d);
    \path[->] (a) to node[midway, left, color=learner] {$\pi_1$} (d);
    \draw [->, very thick, color=learner] (b) to (e);
    \path[->] (b) to node[midway, left, color=learner] {$\pi_2$} (e);
    \draw [->, very thick, color=learner] (c) to (f);
    \path[->] (c) to node[midway, left, color=learner] {$\pi_3$} (f);
    
    \draw [->, very thick, color=black, bend right] (g) to (b);
    \draw [->, very thick, color=black, bend right=90, looseness=1.2] (g) to (c);

    \end{tikzpicture}
    \caption{On-Policy}
\end{subfigure}
    \quad
\begin{subfigure}[b]{0.3\textwidth}
    \begin{tikzpicture}[scale=0.8, transform shape]
    \node (a) [draw, very thick, circle] at (0.0, 0) {$h_1$};
    \node (b) [draw, very thick, circle] at (1.5, 0) {$h_2$};
    \node (c) [draw, very thick, circle]  at (3, 0) {$h_3$};
    \node (d) [draw, very thick, circle] at (0.0, -1.5) {$a_1$};
    \node (e) [draw, very thick, circle] at (1.5, -1.5) {$a_2$};
    \node (f) [draw, very thick, circle]  at (3, -1.5) {$a_3$};
    \node (g) [draw, very thick, circle]  at (1.5, -2.5) {$c$};

    \draw [->, very thick, color=black] (a) to (b);
    \path[->] (a) to node[midway, below, color=black] {$\mathcal{T}$} (b);
    \draw [->, very thick, color=black] (b) to (c);
    \path[->] (b) to node[midway, below, color=black] {$\mathcal{T}$} (c);
    
    \draw [->, very thick, color=black] (d) to (b);
    \draw [->, very thick, color=black] (e) to (c);
    
    \draw [->, very thick, color=expert] (a) to (d);
    \path[->] (a) to node[midway, left, color=expert] {$\pi^E$} (d);
    \draw [->, very thick, color=expert] (b) to (e);
    \path[->] (b) to node[midway, left, color=expert] {$\pi^E$} (e);
    \draw [->, very thick, color=learner] (c) to (f);
    \path[->] (c) to node[midway, left, color=learner] {$\pi_3$} (f);
    
    \draw [->, very thick, color=black, bend right] (g) to (b);
    \draw [->, very thick, color=black, bend right=90, looseness=1.2] (g) to (c);

    \draw [->, very thick, color=expert] (g) to (d);
    \path[->] (g) to node[midway, below, color=expert] {$\pi^E$} (d);
    \draw [->, very thick, color=expert] (g) to (e);
    \end{tikzpicture}
    \caption{Off-Policy}
\end{subfigure}
    \caption{\textbf{(a)}: The SCM that corresponds to the generative process for expert trajectories. \textbf{(b)}: The SCM corresponds to the generative process for learner trajectories in reality. \textbf{(c)}: The SCM that corresponds to the generative process that off-policy algorithms assume -- intuitively, it corresponds to the expert taking all actions up till the current timestep and then handing off control.}
    \label{fig:scms}
\end{figure}
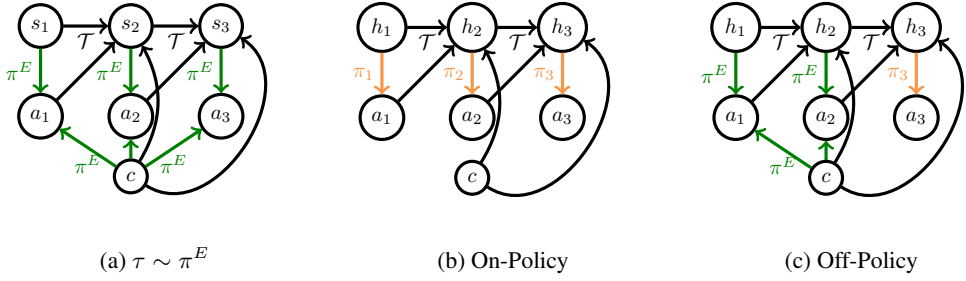

The key difference between on-policy and off-policy imitation learning algorithms is the difference between the center and right SCMs of \figref{fig:scms}. On-policy algorithms assume the data they have seen thus far is generated by executing learner $\pi$ (b) while off-policy algorithms assume it is generated by executing the expert policy $\pi^E$ (c). The off-policy approximation is what leads to the latching behavior observed empirically: even if the first action was chosen at random, \textit{by treating it as though it was produced by the expert $\pi^E$ (who sees the context $c$ and therefore picks actions that are correlated across time), the learner is likely to continue to repeat the past action}. This is also what we observe empirically in the causal bandit problem in the cases where off-policy IL does not work: the learner continues to play the first arm it chose, ignoring the feedback it gets that it is repeatedly making a mistake, \added{and only matching expert performance on a $\frac{1}{K}$ of episodes. We note this gives the off-policy learner a choice between fire and pyre (in the words of \citet{littlegidding}): they need to use history to narrow down the context but if they do, they can learn a latching policy that performs poorly at test-time.}

\subsection{Off Policy Methods Have an Incorrect Context Posterior}
Another way of seeing this point is by considering what the posterior over the confounder would be under samples from each of these SCMs. \added{If the learner was able to accurately pin down the correct arm, they would be able to easily reproduce the expert policy. Thus, we can focus on correct-arm identification} via Bayes Rule. Assuming a uniform prior over contexts,
\begin{equation}
    p(c|h_t) \propto p(c, h_t).
\end{equation}

Under the off-policy graphical model, $p(c, h_t) \propto p(\tau; \pi^E)$: the probability of a history $h_t$ under the expert's distribution (\figref{fig:scms}, (a)). Expanding terms, we arrive at the following expression
\begin{proposition} The off-policy posterior over contexts is
    \begin{equation}
    p_{\mathrm{off}}(c, h_t) \propto p(h_t; \pi^E) \propto p(c)p(s_1)\prod_{i=1}^{t-1} {\color{expert}\pi^E(a_i|c, s_i)} \mathcal{T}(s_{i+1}|s_{i}, a_{i}, c).
\end{equation}
\end{proposition}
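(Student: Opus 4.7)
The plan is to treat the statement as essentially a direct unpacking of the off-policy structural causal model depicted in \figref{fig:scms}(c). Once one formalizes what ``off-policy posterior'' means, the factorization in the claim is read off from the DAG by the chain rule, so the work is really in justifying each proportionality rather than carrying out nontrivial computation.

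First I would fix notation. A length-$t$ history is $h_t = (s_1, a_1, \dots, s_{t-1}, a_{t-1}, s_t)$, and the variables in the joint $(c, h_t)$ are $c, s_1, a_1, \dots, s_t$. The off-policy approximation, as described in the paragraph preceding the proposition and captured in SCM (c), is the assumption that every action $a_i$ appearing in $h_t$ was drawn from the expert's conditional $\pi^E(\cdot \mid c, s_i)$ rather than from the learner's policy $\pi_i(\cdot \mid h_i)$. I would make this definitional: write $p_{\textrm{off}}(c, h_t) \triangleq p(c, h_t; \pi^E)$, the marginal induced by Eq.~(1) on the variables of $h_t$ together with $c$.

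Next I would carry out the two proportionality steps. For the first, invoke Bayes' rule under the uniform prior over $\mathcal{C}$ to obtain $p_{\textrm{off}}(c \mid h_t) \propto p_{\textrm{off}}(c, h_t)$, which is what the left-most $\propto$ in the statement records. For the second, note that Eq.~(1) already \emph{is} the joint density on $(c, s_1, a_1, \dots, s_t)$ induced by the expert SCM on a length-$t$ trajectory (the product stops at $i = t-1$, so only $s_t$, not $a_t$, is generated). Hence no marginalization is required: substituting the DAG factorization
\begin{equation}
p(c, h_t; \pi^E) = p(c)\,p(s_1)\prod_{i=1}^{t-1} \pi^E(a_i \mid c, s_i)\,\mathcal{T}(s_{i+1} \mid s_i, a_i, c)
\end{equation}
into the definition of $p_{\textrm{off}}(c, h_t)$ yields the stated expression.

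The main conceptual (not technical) obstacle is making sure the reader sees why this factorization is \emph{the} off-policy posterior and not just the expert's joint: the key point is that off-policy algorithms such as behavioral cloning fit a conditional action model on i.i.d.\ expert trajectories, which is equivalent to performing inference in the SCM of \figref{fig:scms}(c) where every $a_i$ in $h_t$ is an expert action. I would emphasize this modeling identification in a sentence or two before the display, so the subsequent comparison with the on-policy posterior (in which the $\pi^E(a_i \mid c, s_i)$ factors are replaced by $\pi_i(a_i \mid h_i)$, which are independent of $c$ and thus drop out of the posterior) lands cleanly and motivates the latching diagnosis.
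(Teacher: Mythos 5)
Your proposal is correct and follows essentially the same route as the paper, which simply observes that under the off-policy SCM the joint $p(c, h_t)$ is the expert's trajectory density from Eq.~(1) truncated at $s_t$, and then expands that factorization directly; your added care in spelling out the Bayes-rule step under the uniform prior and in noting that no marginalization is needed is consistent with, and slightly more explicit than, the paper's one-line derivation.
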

Notice that for the expert policy, the context $c$ directly influences actions so conditioning on actions when attempting to predict $c$ is correct. We highlight in {\color{expert} green} the term that encodes this dependence. In contrast, if we assume the learner has no access to the context except via its influence on the history of states, we should instead treat actions as \textit{interventions} \citep{pearl2016causal} that provide no further information about the context. The on-policy graphical model (\figref{fig:scms}, (b)) does just this, allowing us to compute the correct posterior over contexts. Formally,
\begin{proposition} The on-policy posterior over contexts is
\begin{equation}
    p_{\mathrm{on}}(c, h_t) \propto p(h_t; \pi) = p(c|do(a_1) \dots do(a_{t-1}), s_1 \dots s_t) \propto p(c)p(s_1)\prod_{i=1}^{t-1} \mathcal{T}(s_{i+1}|s_{i}, a_{i}, c),
\end{equation}
\end{proposition}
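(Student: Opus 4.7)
The plan is to read off the joint distribution $p(c, h_t)$ directly from the on-policy structural causal model in Figure 2(b) and then discard the factors that do not depend on $c$. The critical structural fact, visible from the graph, is that in the on-policy SCM the context $c$ has \emph{no} outgoing edge into any action node $a_i$; the learner's policies $\pi_i(a_i \mid h_i)$ are functions of history alone. This is precisely what lets conditioning on past actions be replaced by intervening on them.

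First I would factorize the joint along the DAG of Figure 2(b):
\begin{equation*}
    p_{\mathrm{on}}(c, h_t) \;=\; p(c)\, p(s_1) \prod_{i=1}^{t-1} \pi_i(a_i \mid h_i)\, \mathcal{T}(s_{i+1} \mid s_i, a_i, c).
\end{equation*}
Since every $\pi_i(a_i \mid h_i)$ is independent of $c$, these factors are constants in $c$ and may be absorbed into the proportionality symbol, leaving
\begin{equation*}
    p_{\mathrm{on}}(c, h_t) \;\propto\; p(c)\, p(s_1) \prod_{i=1}^{t-1} \mathcal{T}(s_{i+1} \mid s_i, a_i, c),
\end{equation*}
which is exactly the right-hand side of the claim.

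Next I would justify the middle equality by invoking Pearl's rules of do-calculus. Because no directed edge in Figure 2(b) runs from $c$ to any $a_i$, there is no open back-door path between the intervention nodes and $c$. By Rule 2 of do-calculus, conditioning on $a_1, \dots, a_{t-1}$ is therefore interchangeable with intervening on them; equivalently, replacing each mechanism $\pi_i(a_i \mid h_i)$ by a point mass at the chosen action leaves the marginal over $(c, s_1, \dots, s_t)$ unchanged. Carrying out this replacement in the factorization above recovers the same product $p(c) p(s_1) \prod_i \mathcal{T}(s_{i+1} \mid s_i, a_i, c)$, matching the interventional expression in the statement.

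The main obstacle is really just being careful about the do-versus-conditioning step: I need to rule out any hidden path by which the learner's actions could carry information about $c$ beyond what the transitions already encode. That follows from the definition of $\pi$ as a measurable function of $h_i$ together with exogenous randomness independent of $c$, so no unobserved confounder is introduced; once this is noted, the proposition reduces to the routine graph factorization carried out above.
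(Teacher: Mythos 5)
Your proposal is correct and follows essentially the same route as the paper: both read the joint off the on-policy SCM of Figure 2(b), absorb the context-independent factors $\pi_i(a_i\mid h_i)$ into the proportionality constant, and justify the interventional rewriting via Rule 2 of the $do$-calculus using the independence $(c \perp a_{1\dots t} \mid s_{1\dots t})$ in $\mathcal{G}_{\underline{a_{1:t}}}$. Your explicit factorization is if anything more detailed than the paper's one-line justification; the only loose spot is attributing the absence of an open back-door path solely to the missing edge $c \to a_i$, whereas the mediated paths $a_i \leftarrow h_i \leftarrow \cdots \leftarrow c$ are blocked only because the history is conditioned on --- a point you do address in your closing paragraph.
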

where the equality follows from the fact that
\begin{equation}
    (c \perp a_{1 \dots t} | s_{1 \dots t})_{\mathcal{G}_{\underline{a_{1:t}}}}
\end{equation}
and the standard $do()$-calculus rules \citep{pearl2016causal}. Intuitively, we are leveraging the fact that the learner's actions have all their dependence on the context mediated through the history of states to ignore them in our posterior calculation. Notice that this expression matches the off-policy posterior except for the term in green: \textit{because the on-policy learner knows that the actions in the history were not produced by the expert, it does not weight them by the expert's probability of playing them in its posterior update}. Graphically, as far as the posterior over the context is concerned, on-policy approaches correspond to the SCM in \figref{fig:do}, left.
\begin{figure}[H]
    \centering
    \begin{tikzpicture}[scale=0.8, transform shape]
    \node (a) [draw, very thick, circle] at (0.0, 0) {$h_1$};
    \node (b) [draw, very thick, circle] at (1.5, 0) {$h_2$};
    \node (c) [draw, very thick, circle]  at (3, 0) {$h_3$};
    \node (d) [draw, very thick, circle] at (0.0, -1.5) {$a_1$};
    \node (e) [draw, very thick, circle] at (1.5, -1.5) {$a_2$};
    \node (f) [draw, very thick, circle]  at (3, -1.5) {$a_3$};
    \node (g) [draw, very thick, circle]  at (1.5, -2.5) {$c$};

    \draw [->, very thick, color=black] (a) to (b);
    \path[->] (a) to node[midway, below, color=black] {$\mathcal{T}$} (b);
    \draw [->, very thick, color=black] (b) to (c);
    \path[->] (b) to node[midway, below, color=black] {$\mathcal{T}$} (c);
    
    \draw [->, very thick, color=black] (d) to (b);
    \draw [->, very thick, color=black] (e) to (c);
    
    
    \draw [->, very thick, color=black, bend right] (g) to (b);
    \draw [->, very thick, color=black, bend right=90, looseness=1.2] (g) to (c);

    \end{tikzpicture}
    \includegraphics[width=0.38\textwidth]{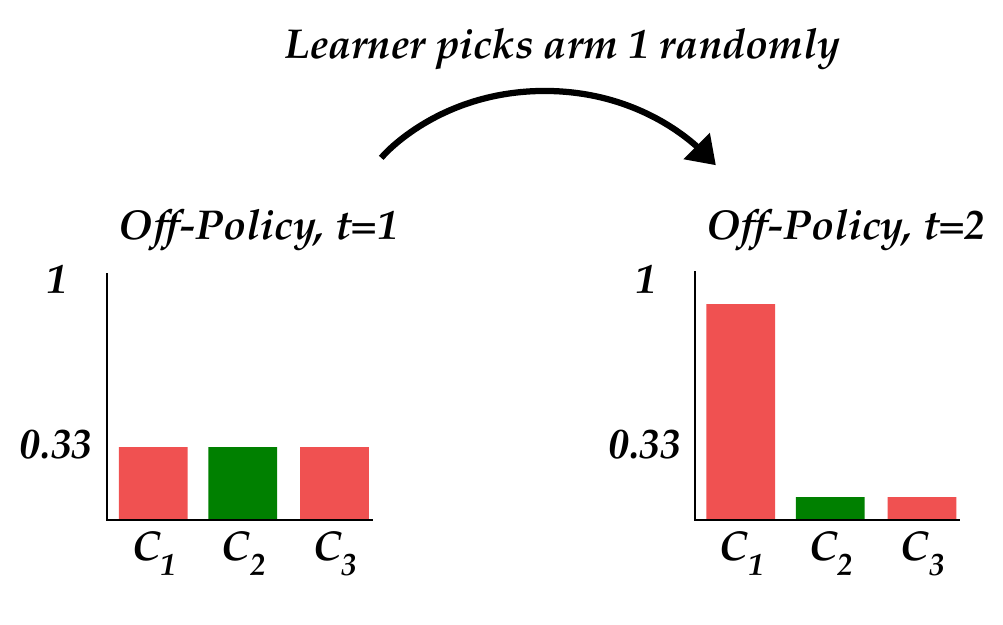}
    \caption{\textbf{Left:} The SCM for a CMDP that treats actions as interventions that provide no useful information about the context. This SCM is what the on-policy learner uses when performing posterior updates. \textbf{Right:} By treating its own actions as though they came from the expert, an off-policy learner's posterior collapses too quickly on an incorrect context, causing latching.}
    \label{fig:do}
\end{figure}

By ignoring \added{its own} incorrect actions early on, the on-policy learner \added{doesn't over-index on them and come to a false conclusion as to what the value of the context is}, as illustrated in \figref{fig:do}, right. We've already seen evidence of how the addition or removal of the green term can lead to markedly different results: the learners we used to generate \figref{fig:bandit} had policies given by
\begin{equation}
    \pi^{\mathrm{off}}(a_t|h_t) = \sum_{c \in \mathcal{C}}p_{\mathrm{off}}(c|h_t)\pi^E(a_t|c, s_t),
\end{equation}
and
\begin{equation}
    \pi^{\mathrm{on}}(a_t|h_t) = \sum_{c \in \mathcal{C}}p_{\mathrm{on}}(c|h_t)\pi^E(a_t|c, s_t).
\end{equation}
Note that these policies are exactly equivalent except for the green term in the posterior of the off-policy $\pi^{\mathrm{off}}$, yet respond quite differently as the parameters of the problem are changed. \added{Also note that $p_{\mathrm{on}}(c|h_t)$ requires interaction with the environment to evaluate, while $p_{\mathrm{off}}(c|h_t)$ does not.}

\section{A Moment-Matching Solution to the Latching Effect}

We now turn our attention to generalizing this argument beyond Bayesian learners, including to those that match sufficient statistics of expert behavior -- \textit{moments} -- rather than maintaining explicit posteriors, and providing value equivalence guarantees for history-equipped on-policy learners.

\subsection{A Quick Review of Moment-Matching in Imitation Learning}
First, as in \citet{swamy2021moments}, we define $\mathcal{F}_{Q_E}$ as the set of \textit{on-$Q$ moments}, with $f \in \mathcal{F}_{Q_E}$ satisfying type signature $\mathcal{S} \times \mathcal{A} \times \mathcal{C} \rightarrow [-T, T]$. Intuitively, $\mathcal{F}_{Q_E}$ spans the set of possible expert $Q$-functions. We require the actual expert $Q$-function to be contained: $Q^{\pi_E} \in \mathcal{F}_{Q_E}$. We assume that $\mathcal{F}_{Q_E}$ is convex, compact, closed under negation, and finite dimensional. Second, define $\mathcal{F}_{Q}$ be the class of \textit{off-$Q$ moments} \added{(i.e. $\forall \pi \in \Pi$, $Q^{\pi} \in \mathcal{F}_{Q}$)} and satisfy the same function-class assumptions as $\mathcal{F}_{Q_E}$. Third, let $\mathcal{F}_{r}$ denote the class of reward moments and also satisfy the same function-class assumptions. All $f \in \mathcal{F}_{r}$ satisfy type signature $\mathcal{S} \times \mathcal{A} \times \mathcal{C} \rightarrow [-1, 1]$ and we assume that $r \in \mathcal{F}_r$.

\added{\cite{swamy2021moments} prove that if one is able to approximately solve a two-player zero-sum \textit{moment-matching game} between a policy player (that picks from $\Pi$) and a discriminator (that picks from $\mathcal{F}_{Q_E}$, $\mathcal{F}_{Q}$, or $\mathcal{F}_{r}$), one has a bound on the performance difference between the learner and the expert. Depending on which class the discriminator selects from, one ends up with a different bound. For example, if one solves the game with the following payoff and $f \in \mathcal{F}_{r}$ to an $\epsilon$-approximate Nash equilibrium, } 
\begin{equation}
    U(\pi, f) = \frac{1}{T}(\mathop{{}\mathbb{E}}_{\tau \sim \pi}[\sum_{t=1}^T f(s_t, a_t, c)] - \mathop{{}\mathbb{E}}_{\substack{\tau \sim \pi^E}}[\sum_{t=1}^T f(s_t, a, c)]),
\end{equation}
one has a guarantee that $J(\pi^E) - J(\pi) \leq \epsilon T$. Unfortunately, directly solving such a game is not possible in the hidden context setting as we do not see the contexts and therefore cannot evaluate the moment functions. \footnote{\added{The astute reader might notice that we need access to the context to \textit{simulate} learner rollouts as the context also affects the transition model. Thus, as long as one can simulate, one can do standard moment-matching in the space of context-dependent moments. We write things in history-space to also handle the real-world setting where contexts are always unavailable.}}  We turn our attention to adapting moment-matching to the contextual setting.

\subsection{Moment-Matching with Unobserved Contexts}
For each moment class, we assume the existence of an \textit{observable moment class} that operates over the space of histories (i.e. $\mathcal{H} \times \mathcal{A} \rightarrow \mathbb{R}$) and has members that eventually produce outputs close to that of their context-dependent counterparts. In math,
\begin{assumption}[Asymptotic On-$Q$ Moment Identifiability]
    \begin{equation}
    \forall f \in \mathcal{F}_{Q_E}, \forall c \in \mathcal{C}, \exists \tilde{f} \in \tilde{\mathcal{F}}_{Q_E} \, \mathrm{ s.t. } \, \lim_{T \to \infty} \sup_{\substack{\pi \in \Pi \cup \{ \pi^E \}\\ a \in \mathcal{A}}} \mathbb{E}_{\tau \sim \pi, c}[f(s_T, a, c) - \tilde{f}(h_T, a) ] = 0,
\end{equation}
    \label{asmp:id}
\end{assumption}

\added{Analogous conditions can be defined for $\mathcal{F}_{r}$ and $\mathcal{F}_{Q}$, giving us $\tilde{\mathcal{F}}_{r}$ and $\tilde{\mathcal{F}}_{Q}$.} We note that these conditions are asymptotic in nature -- we use $\delta(t)$ to refer to the finite-horizon expected difference between the observable and context-dependent moments. \added{We use $H$ to denote the \textit{moment recoverability constant}, which bounds how much total cost is incurred for the expert to recover from an arbitrary mistake \citep{swamy2021moments}}:
\begin{equation}
    H = |\sup_{\substack{a \in \mathcal{A},  s \in \mathcal{S}\\ c \in \mathcal{C} , f \in \mathcal{F}}_{Q_E}} f(s, a, c) - \mathbb{E}_{a' \sim \pi^E(s, c)}[f(s, a', c)]|
\end{equation}

\added{For problems where the expert is able to effectively correct learner mistakes, this quantity can be significantly smaller than the horizon.} Define $\tilde{\mathcal{F}}_{\mathrm{on}} = \{\tilde{f} / 2H: \tilde{f} \in \tilde{\mathcal{F}}_{Q_E} \}$ and $\tilde{\mathcal{F}}_{\mathrm{off}} = \{\tilde{f} / 2T: \tilde{f} \in \tilde{\mathcal{F}}_{Q} \}$ to be scaled-down versions of the observable moments such that their range is $[-1, 1]$. We can now define our moment-matching errors:
\begin{equation}
    \epsilon_{\mathrm{on}}(t) = \sup_{\tilde{f} \in \tilde{\mathcal{F}}_{\mathrm{on}}} \mathbb{E}_{\tau \sim \pi}[\tilde{f}(h_t, a_t) - \mathbb{E}_{a' \sim \pi^E(s_t, c)}[\tilde{f}(h_t, a')]],
\end{equation}
\begin{equation}
    \epsilon_{\mathrm{off}}(t) = \sup_{\tilde{f} \in \tilde{\mathcal{F}}_{\mathrm{off}}} \mathbb{E}_{\tau \sim \pi^E}[\tilde{f}(h_t, a_t) - \mathbb{E}_{a' \sim \pi(h_t)}[\tilde{f}(h_t, a')]],
\end{equation}
\begin{equation}
    \epsilon_{\mathrm{rew}}(t) = \sup_{\tilde{f} \in \tilde{\mathcal{F}}_{\mathrm{r}}} \mathbb{E}_{\tau \sim \pi}[\tilde{f}(h_t, a_t)] - \mathbb{E}_{\tau \sim \pi_E}[\tilde{f}(h_t, a_t)].
\end{equation}
\added{As argued by \cite{swamy2021moments}, $\epsilon_{\mathrm{on}}(t)$ governs the performance of on-$Q$ algorithms like DAgger \citep{dagger}, $\epsilon_{\mathrm{off}}(t)$ the performance of off-$Q$ algorithms like behavioral cloning \citep{pomerleau1989alvinn}, and $\epsilon_{\mathrm{rew}}(t)$ the performance of reward-matching algorithms like MaxEnt IRL \citep{ziebart2008maximum} and GAIL \citep{ho2016generative}.} With these definitions laid out, we can now prove how well each class of algorithms handles unobserved contexts.




\subsection{Asymptotic Realizability in Imitation Learning}
For some partial information problems, the use of history coupled with on-policy feedback might allow the learner to eventually match expert performance. This is an example of a more general phenomenon we term \textit{asymptotic realizability} (AR), in which the learner is able to perform as well as the expert does with high probability after observing an arbitrary history of some length. We begin by defining the \textit{average imitation gap} or AIG for short.
\begin{definition}[AIG($\pi$, T)]
    \begin{equation}
    \mathbb{E}_{\substack{\tau \sim \pi^E}}[\frac{1}{T}\sum_{t=1}^T r(s_t, a_t, c)] -  \mathbb{E}_{\substack{\tau \sim \pi \\ }}[\frac{1}{T}\sum_{t=1}^T r(s_t, a_t, c)].
\end{equation}
\end{definition}
We now define our performance target in AR problems. 
\begin{definition}[Asymptotic Value Equivalence (AVE)] We say that policy $\pi$ is asymptotically value-equivalent (AVE) when the following condition holds true:
\begin{equation}
     \lim_{T \to \infty} \mathrm{AIG}(\pi, T) = 0.
\end{equation}
\end{definition}
\added{Intuitively, this condition means that the learner performs as well as the expert does on average, given enough time. Put differently, we do not penalize the learner for initial mistakes as long as they are able to learn enough from them to match expert performance. We will proceed by studying the AIG properties of on-policy and off-policy imitation learning algorithms.}

We are now ready to state our main result:

\begin{theorem}[AVE of IL Algorithms] Define $\limsup_{t \to \infty}\epsilon_{\mathrm{on}}(t) = \epsilon_{\mathrm{on}}(\infty)$, $\lim_{T \to \infty} \sum_{t}^T\epsilon_{\mathrm{off}}(t) + \delta_{\mathrm{off}}(t) = \Sigma_{\mathrm{off}}(\infty)$, and $\limsup_{t \to \infty}\epsilon_{\mathrm{rew}}(t) = \epsilon_{\mathrm{rew}}(\infty)$. For all (C)MDPs and $\pi$,
\begin{equation}
         \added{\lim_{T \to \infty}} \mathrm{AIG}(\pi, T) \leq \epsilon_{\mathrm{on}}(\infty)H,
\end{equation}
\begin{equation}
         \added{\lim_{T \to \infty}} \mathrm{AIG}(\pi, T) \leq \Sigma_{\mathrm{off}}(\infty),
\end{equation}
\begin{equation}
         \added{\lim_{T \to \infty}} \mathrm{AIG}(\pi, T) \leq \epsilon_{\mathrm{rew}}(\infty).
\end{equation}
\label{thm:ave}
\end{theorem}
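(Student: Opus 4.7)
The plan is to lift the standard moment-matching performance bounds of \cite{swamy2021moments} from context-dependent moments to observable history-based moments, paying the identifiability gap $\delta(t)$ per step, and then use the time-averaging $\tfrac{1}{T}$ in $\mathrm{AIG}$ (together with $\delta(t) \to 0$) to cancel out transient early-horizon errors. All three bounds share a common template: (i) decompose $J(\pi^E) - J(\pi)$ via a performance-difference lemma written in context-space; (ii) for each timestep $t$, swap the inaccessible $f \in \mathcal{F}$ for its observable analogue $\tilde{f} \in \tilde{\mathcal{F}}$ by paying $\delta(t)$; (iii) divide by $T$ and pass to $T \to \infty$.

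For the on-$Q$ bound, I would apply the Kakade--Langford performance-difference lemma to write
\begin{equation}
J(\pi^E) - J(\pi) = \sum_{t=1}^{T} \mathbb{E}_{h_t \sim \pi, c}\!\left[\mathbb{E}_{a \sim \pi^E(s_t,c)}[Q^{\pi^E}(s_t,a,c)] - \mathbb{E}_{a \sim \pi(h_t)}[Q^{\pi^E}(s_t,a,c)]\right],
\end{equation}
upper-bound each integrand by $2H$ times the sup over $\tilde{f} \in \tilde{\mathcal{F}}_{\textrm{on}}$ (using the definition of $H$ to center $Q^{\pi^E}$ and Assumption~\ref{asmp:id} to trade $f \in \mathcal{F}_{Q_E}$ for $\tilde{f}$ at per-step cost $\delta_{\textrm{on}}(t)$), and sum to get $J(\pi^E) - J(\pi) \leq 2H \sum_{t=1}^T [\epsilon_{\textrm{on}}(t) + \delta_{\textrm{on}}(t)]$. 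Dividing by $T$ and passing to the limit, a Cesaro argument combined with $\delta_{\textrm{on}}(t) \to 0$ yields $\lim_T \mathrm{AIG}(\pi,T) \leq H \cdot \epsilon_{\textrm{on}}(\infty)$ (absorbing constants into the $[-1,1]$ scaling of $\tilde{\mathcal{F}}_{\textrm{on}}$). The reward-moment bound follows the same template but does not pass through $H$: the reward already has range $[-1,1]$, so the decomposition $J(\pi^E) - J(\pi) = \sum_t \mathbb{E}_{\pi^E}[r] - \mathbb{E}_\pi[r]$ together with the $\tilde{\mathcal{F}}_{r}$ swap gives $\epsilon_{\textrm{rew}}(\infty)$ after time-averaging. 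The off-$Q$ bound instead decomposes the regret under the \emph{expert's} occupancy measure (so that the policy player lives inside the expectation), which compounds errors to introduce an extra factor of $T$; this $T$ cancels against the $1/T$ in $\mathrm{AIG}$, leaving the un-averaged sum $\Sigma_{\textrm{off}}(\infty) = \lim_T \sum_{t=1}^T [\epsilon_{\textrm{off}}(t) + \delta_{\textrm{off}}(t)]$ as the asymptotic bound.

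The main obstacle is the interplay between the asymptotic identifiability assumption---stated as a pointwise $t \to \infty$ limit for each $f$---and the finite-horizon per-step bounds the argument needs. I would define $\delta_{\textrm{on}}(t) = \sup_{f \in \mathcal{F}_{Q_E},\, \pi \in \Pi \cup \{\pi^E\},\, a \in \mathcal{A}} \mathbb{E}_{\tau \sim \pi, c}[f(s_t, a, c) - \tilde{f}(h_t, a)]$ and use convexity and compactness of $\mathcal{F}_{Q_E}$ to upgrade pointwise-in-$f$ convergence to uniform convergence, ensuring $\delta_{\textrm{on}}(t) \to 0$. Cesaro then gives $\tfrac{1}{T}\sum_t \delta_{\textrm{on}}(t) \to 0$, which is exactly what makes the identifiability slack disappear from the on-$Q$ and reward bounds; the same argument applies to $\delta_{\textrm{rew}}$. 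A secondary subtlety is that the off-$Q$ statement is \emph{not} time-averaged in the limit---it is only informative when $\Sigma_{\textrm{off}}(\infty) < \infty$, an implicit summability condition that reflects the fundamentally harder asymptotic task faced by off-policy methods and aligns with the latching-effect intuition developed earlier.
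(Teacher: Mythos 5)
Your proposal matches the paper's proof in all essentials: the same three performance-difference-lemma decompositions (on-$Q$ under the learner's occupancy, off-$Q$ under the expert's, reward directly), the same $f_t - \tilde{f}_t + \tilde{f}_t$ swap that isolates the identifiability slack $\delta(t)$, the same Ces\`aro/$\limsup$ tail argument for the averaged bounds, and the same observation that the $T$-scaling of $\tilde{\mathcal{F}}_{\textrm{off}}$ cancels the $1/T$ to leave the unaveraged sum $\Sigma_{\textrm{off}}(\infty)$. Your remark on upgrading pointwise-in-$f$ convergence to uniform convergence via compactness is a detail the paper elides, but it does not change the route.
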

In words, if either an on-$Q$ or reward-matching learner is able to achieve 0 moment matching error asymptotically, they will achieve the same average value as the expert. Even if they are unable to do so, the AIG will be bounded by a constant. In contrast, the result for off-policy algorithms is much weaker.
\added{Notice that instead of taking a $\limsup$ which, roughly speaking, captures the asymptotic error, the off-policy bound is in terms of the \textit{sum} of errors, which factors in errors made early on.} This result indicates that \added{an off-policy learner that makes mistakes early on in the episode (as is likely for contextual problems) might be unable to eventually match expert performance, even when the problem is recoverable.} We prove that there exist certain problems for which this bound is tight.

\begin{theorem}[Off-Policy AVE Lower Bound] There exist CMDPs and $\pi$ s.t. $\limsup_{t \to \infty}\epsilon_{\mathrm{off}}(t) = 0$ for which
    \begin{equation}
         \added{\lim_{T \to \infty}} \mathrm{AIG}(\pi, T) \gtrsim \Sigma_{\mathrm{off}}(\infty).
\end{equation}
\label{thm:offq}
\end{theorem}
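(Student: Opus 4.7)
The plan is to prove this lower bound by constructing an explicit counterexample drawn from the Causal Bandit Problem already introduced in the paper. Concretely, I will fix $K \geq 2$ arms and choose parameters $\epsilon_{\text{exp}}, \epsilon_{\text{obs}} \in (0, 1/2)$ in the regime where Figure~\ref{fig:bandit} shows BC latching, and take $\pi = \pi^{bc}$, the off-policy learner derived from the off-policy posterior. Since the one-arm CMDP is trivially recoverable (the expert can fix any mistake in a single step) and asymptotically identifiable in this parameter regime, this example isolates exactly the latching pathology the theorem is trying to expose, and sits in the ``gap'' between what the on-policy and off-policy bounds of Theorem~\ref{thm:ave} can capture.

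The first step is to verify $\limsup_{t \to \infty} \epsilon_{\text{off}}(t) = 0$. Recall that $\epsilon_{\text{off}}(t)$ is evaluated under the \emph{expert}'s trajectory distribution; by step $t$, the expert has played the correct arm roughly $(1-\epsilon_{\text{exp}})t$ times, so the off-policy posterior $p_{\text{off}}(c \mid h_t)$ concentrates on the true context at an exponential rate via a standard concentration argument on the log-likelihood ratio between contexts. Hence $\pi^{bc}(a_t \mid h_t)$ converges (in total variation, uniformly over $\tilde{f} \in \tilde{\mathcal{F}}_{\text{off}}$) to $\pi^E(a_t \mid c, s_t)$, driving the moment gap to zero. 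The second step is to show $\Sigma_{\text{off}}(\infty) = \lim_{T \to \infty}\sum_{t=1}^{T}\epsilon_{\text{off}}(t) + \delta_{\text{off}}(t)$ is a finite, nonzero constant of order, say, $C(\epsilon_{\text{exp}}, \epsilon_{\text{obs}}, K)$: the exponential convergence from step one makes the sum geometric, and $\delta_{\text{off}}(t)$ inherits the same exponential tail because $\tilde{\mathcal{F}}_{\text{off}}$ can be chosen to be posterior-matching moments whose observability error decays with the same posterior-concentration rate.

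The third and crucial step is the lower bound on AIG. Under the \emph{learner}'s own trajectory distribution, with probability $(K-1)/K$ the first arm sampled uniformly at random is incorrect; on such trajectories the feedback term in $p_{\text{off}}$ carries essentially no weight relative to the $\pi^E(a_i \mid c, s_i)$ factor, so as shown in Section~4.2 the posterior collapses on the wrong arm and $\pi^{bc}$ repeats that wrong arm for all remaining steps. Thus the per-step reward gap between expert and learner is at least $(1-\epsilon_{\text{exp}}) - 1/K$ on a constant fraction of episodes, giving $\lim_{T \to \infty}\text{AIG}(\pi^{bc}, T) \geq c_0 > 0$ for a constant $c_0$ depending only on $\epsilon_{\text{exp}}$ and $K$. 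Tuning the parameters so that $c_0$ and $C$ are of the same order, which is possible because both are continuous functions of the same $(\epsilon_{\text{exp}}, \epsilon_{\text{obs}}, K)$ and both are bounded in $[0, 1]$, yields $\lim_{T \to \infty} \text{AIG}(\pi, T) \gtrsim \Sigma_{\text{off}}(\infty)$.

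The main obstacle I anticipate is the third step, specifically getting the constants to line up: I need $\Sigma_{\text{off}}(\infty)$ and the AIG lower bound to be of the same order rather than, say, having a tiny $\Sigma_{\text{off}}(\infty)$ paired with a constant AIG, which would violate the $\gtrsim$ claim in the wrong direction (trivially weakening it) but not prove the intended tightness. Careful choice of $\tilde{\mathcal{F}}_{\text{off}}$ is needed here: one wants it rich enough to witness a nontrivial $\Sigma_{\text{off}}(\infty)$ (so the bound is not vacuous), but not so rich that it fails the asymptotic identifiability assumption. A clean way to handle this is to parameterize $\tilde{\mathcal{F}}_{\text{off}}$ explicitly as scaled indicator-against-posterior moments, whose per-step error is exactly the TV distance between $\pi^{bc}(\cdot \mid h_t)$ and $\pi^E(\cdot \mid c, s_t)$ under $\tau \sim \pi^E$, matching the per-step reward gap under $\tau \sim \pi^{bc}$ up to the $K$-dependent constant from the random-latching event.
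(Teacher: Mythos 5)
Your approach diverges from the paper's, and it has a genuine gap at exactly the step you flag as the main obstacle. The paper does not use the causal bandit here: it uses the Cliff problem of \citet{swamy2021moments}, with no hidden context (so $\delta_{\textrm{off}}(t)=0$) and a learner that steps off the cliff at time $t$ with probability $\frac{1}{t+1}$. This makes $\epsilon_{\textrm{off}}(t)=\frac{1}{t+1}\to 0$ while $\Sigma_{\textrm{off}}(\infty)$ diverges (harmonic series), and because a fall forfeits all remaining reward, the average gap is computed exactly as $\sum_{t=1}^T \epsilon_{\textrm{off}}(t)\left(1-\frac{t}{T}\right)\to\Sigma_{\textrm{off}}(\infty)-1$. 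The whole point of that construction is that the AIG \emph{is} the accumulated sum of per-step off-policy errors, in a regime where that sum is large; that is what makes the upper bound of Theorem \ref{thm:ave} tight.

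Your bandit construction cannot deliver this. For the exact-posterior $\pi^{bc}$ you propose, $\pi^{bc}(\cdot\mid h_t)$ coincides with the expert's posterior predictive $p(a_t\mid h_t;\pi^E)$, so $\epsilon_{\textrm{off}}(t)$ (which is measured under $\tau\sim\pi^E$) is identically zero, or at best exponentially small, and $\Sigma_{\textrm{off}}(\infty)$ is a finite constant $C$ driven essentially by $\delta_{\textrm{off}}$. Meanwhile your AIG lower bound $c_0$ comes from latching under the \emph{learner's} rollout distribution---a distribution-shift effect that $\epsilon_{\textrm{off}}$ does not see at all. You are therefore left needing $c_0\geq \mathrm{const}\cdot C$ for two constants measured under different distributions with no structural link, and your proposed resolution (both quantities are continuous in the parameters and bounded in $[0,1]$) does not imply the ratio is bounded below; note also that $\Sigma_{\textrm{off}}(\infty)$ is an infinite sum of per-step errors and need not lie in $[0,1]$ at all. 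Read with an instance-dependent constant the claim becomes vacuous (any instance with positive AIG and finite $\Sigma_{\textrm{off}}(\infty)$ would qualify); read with a universal constant, your argument does not establish it. What your construction actually proves is the weaker statement that $\limsup_{t\to\infty}\epsilon_{\textrm{off}}(t)=0$ does not imply AVE. To prove the theorem as stated you need the paper's move: engineer a per-step error that decays to zero non-summably, and make every early mistake permanently costly, so that the gap provably tracks the divergent sum itself.
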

In words, this theorem says that for certain problems, \textit{even if an off-policy learner can drive down error asymptotically, they might be doomed as far as AVE because of mistakes made early on.}

We prove these results in Appendix \ref{app:proofs}. To come full circle, we now argue that CMDPs (including our causal bandit problem) that satisfy the asymptotic moment identifiability conditions along with an \textit{asymptotic realizability} condition on the policy class enable the learner to match expert performance:








\begin{assumption}[Asymptotic Realizability] Asymptotic Moment Identifiability holds and 
$\exists \pi \in \Pi$ s.t. $\lim_{t \to \infty} \epsilon_{\mathrm{on}}(t) = 0$, $\lim_{t \to \infty} \epsilon_{\mathrm{off}}(t) = 0$, and $\lim_{t \to \infty} \epsilon_{\mathrm{rew}}(t) = 0$.
\label{asmp:real}
\end{assumption}
\added{Note this assumption is weaker than a standard realizability assumption -- it is saying that along certain moments of interest, we can asymptotically match the expert. Plugging in this assumption into Theorem \ref{thm:ave} tells us that on such problems, an on-policy learner will be able to achieve AVE.} 

\added{We now turn our attention to efficiently computing such a policy. We prove that by solving an approximate equilibrium computation game over the space of history-based policies and the space of observable moments (which can be done efficiently with no-regret algorithms, \citep{freund1997decision}), one can find a policy that achieves a low AIG.}
\begin{theorem}
\label{thm:on_policy}
For any contextual MDP and policy class that satisfies Asymptotic Realizability,
\added{let $\pi$ be an $\epsilon$-approximate Nash equilibrium strategy for the infinite horizon reward-matching or on-$Q$-matching game. Then, we know that $\added{\lim_{T \to \infty}} \mathrm{AIG}(\pi, T) \leq \epsilon$ or $\added{\lim_{T \to \infty}} \mathrm{AIG}(\pi, T) \leq H\epsilon$.}
\end{theorem}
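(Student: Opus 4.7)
The plan is to chain Asymptotic Realizability (Assumption \ref{asmp:real}), the approximate-equilibrium guarantee, and Theorem \ref{thm:ave}. Treating the on-$Q$-matching game first (the reward case is identical up to dropping the factor of $H$), I would show that the minimax value of the infinite-horizon game is zero, then read off the moment-matching error of $\pi$ from the $\epsilon$-approximate equilibrium definition, and finally apply the corresponding AIG bound from Theorem \ref{thm:ave}.

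Concretely, the game's payoff is
\[
U_{\infty}(\pi, \tilde f) \;=\; \limsup_{t \to \infty} \mathbb{E}_{\tau \sim \pi}\bigl[\tilde{f}(h_t, a_t) - \mathbb{E}_{a' \sim \pi^E(s_t, c)}[\tilde{f}(h_t, a')]\bigr],
\]
with $\tilde f \in \tilde{\mathcal{F}}_{\textrm{on}}$ and $\pi \in \Pi$. Asymptotic Realizability supplies $\pi^{\star} \in \Pi$ with $\lim_{t \to \infty} \epsilon_{\textrm{on}}(t;\pi^{\star}) = 0$, so $\sup_{\tilde f} U_{\infty}(\pi^{\star}, \tilde f) = 0$; since the adversary can always match this by playing the zero moment, the game's minimax value is exactly $0$. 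By definition of $\epsilon$-approximate Nash, this forces $\sup_{\tilde f \in \tilde{\mathcal{F}}_{\textrm{on}}} U_{\infty}(\pi, \tilde f) \leq \epsilon$, i.e., $\epsilon_{\textrm{on}}(\infty) \leq \epsilon$. Plugging into the on-$Q$ bound of Theorem \ref{thm:ave} gives $\lim_{T \to \infty} \mathrm{AIG}(\pi, T) \leq H\epsilon$. Repeating the argument with $\tilde{\mathcal{F}}_{\textrm{r}}$ and $\epsilon_{\textrm{rew}}$ in place of $\tilde{\mathcal{F}}_{\textrm{on}}$ and $\epsilon_{\textrm{on}}$ yields the reward-matching bound of $\epsilon$, with no factor of $H$.

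The main obstacle is making the infinite-horizon game rigorous: the $\limsup$ payoff is convex in each argument but may fail continuity, so Sion's minimax theorem does not apply directly. I would sidestep this by passing through finite-horizon approximations $U_T(\pi, \tilde f) = \frac{1}{T}\sum_{t=1}^T \mathbb{E}_{\tau \sim \pi}[\tilde f(h_t,a_t) - \mathbb{E}_{a'\sim \pi^E(s_t,c)}[\tilde f(h_t,a')]]$, where the minimax structure is clean thanks to the convexity/compactness of $\Pi$, $\tilde{\mathcal{F}}_{\textrm{on}}$, and $\tilde{\mathcal{F}}_{\textrm{r}}$; solve for $\epsilon$-approximate equilibria at each horizon; and then use Assumption \ref{asmp:real} together with a diagonal/tail-averaging argument to show that any limit point of these equilibria achieves $\sup_{\tilde f} U_{\infty}(\cdot, \tilde f) \leq \epsilon$. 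The asymptotic identifiability gap $\delta(t)$ from Assumption \ref{asmp:id} vanishes in the same limit and so contributes no extra term, which is why only the asymptotic quantities $\epsilon_{\textrm{on}}(\infty)$ and $\epsilon_{\textrm{rew}}(\infty)$ appear in the final bound rather than their off-policy summed analogue.
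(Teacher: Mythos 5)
Your proposal is correct and follows essentially the same route as the paper's proof: use Asymptotic Realizability to show the infinite-horizon moment-matching game has value zero, invoke the $\epsilon$-approximate Nash property to conclude $\epsilon_{\textrm{rew}}(\infty) \leq \epsilon$ (resp. $\epsilon_{\textrm{on}}(\infty) \leq \epsilon$), and plug into Theorem \ref{thm:ave}. Your additional care about the existence of the infinite-horizon payoff (via finite-horizon approximations and the vanishing of $\delta(t)$) goes beyond the paper, which simply defines the payoff as a limit and notes in a footnote that it coincides with the time-averaged error when it exists, but this does not change the substance of the argument.
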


\added{In short, by solving an on-policy moment-matching problem over policies that have access to history, we have strong guarantees of matching expert performance on asymptotically realizable problems}. We re-iterate that we have no such guarantees for off-policy algorithms. We also note that our causal bandit problem satisfies these assumptions, providing theoretical justification for our results.
\begin{corollary}
\label{corr:cb}
There exists a singleton observable moment class such that the Causal Bandit Problem satisfies Asymptotic Realizability, implying that the iterates produced by an on-policy moment matching algorithm will achieve AVE.
\end{corollary}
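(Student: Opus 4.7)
The plan is to verify Asymptotic Realizability (Assumption~\ref{asmp:real}) for the reward-matching game on the Causal Bandit Problem and then invoke Theorem~\ref{thm:on_policy}. I take the base class $\mathcal{F}_r$ to contain the single reward $r(s,a,c) = \mathbb{1}[a=c]$, and choose the singleton observable class $\tilde{\mathcal{F}}_r = \{\tilde r\}$ with $\tilde r(h_t, a) = p_{\textrm{on}}(c = a \mid h_t)$, where $p_{\textrm{on}}$ is the on-policy posterior from Section~4.2.

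The heart of the argument is on-policy posterior consistency: as $t \to \infty$, $p_{\textrm{on}}(c^* \mid h_t) \to 1$ almost surely, where $c^*$ is the true context. This follows from Hoeffding-style concentration applied to the noisy-channel feedback, provided (i) $\epsilon_{obs} \neq 1/2$, so each observation is informative, and (ii) every arm is pulled infinitely often. To guarantee (ii), I use the candidate policy $\pi^*(a \mid h_t) = \sum_c p_{\textrm{on}}(c \mid h_t)\,\pi^E(a \mid s,c)$ (the $\pi^{\mathrm{dagger}}$ of Section~4.2). Because $\pi^E$ takes a uniformly exploratory action with probability $\epsilon_{exp}$, $\pi^*$ pulls every arm with probability at least $\epsilon_{exp}/K > 0$ at every timestep, making the Borel--Cantelli step immediate.

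Posterior consistency then delivers both pieces of Asymptotic Realizability at once. First, $\mathbb{E}_{\tau \sim \pi}[\tilde r(h_t, a_t) - r(s_t,a_t,c)] \to 0$ for every $\pi \in \Pi \cup \{\pi^E\}$ that pulls each arm infinitely often, verifying Assumption~\ref{asmp:id} for the singleton observable class. Second, since $\pi^*$ matches the expert action distribution under the correct context in the limit, $\lim_{t \to \infty} \epsilon_{\textrm{rew}}(t) = 0$ along the $\pi^*$ iterate. Plugging into Theorem~\ref{thm:on_policy}, the iterates of an $\epsilon$-approximate Nash solver for the reward-matching game satisfy $\mathrm{AIG} \le \epsilon$, and taking a schedule $\epsilon_n \to 0$ yields AVE.

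The main obstacle is the careful analysis of Bayesian consistency for the \emph{on-policy} posterior, which intentionally drops the $\pi^E(a \mid c,s)$ factor used by the off-policy posterior. One must argue that observation likelihoods alone suffice to pin down $c^*$; since the feedback channel depends only on whether $a = c$ and is a nontrivial binary symmetric channel whenever $\epsilon_{obs} \neq 1/2$, the per-arm log-likelihood ratios have strictly negative mean under every wrong hypothesis, and a martingale concentration argument on the adaptively collected history closes the loop. A minor secondary subtlety is confirming that the singleton $\tilde{\mathcal{F}}_r$ is compatible with the convexity/negation-closure hypotheses of Section~5.1, which can be handled by passing to the symmetric convex hull $\{\alpha\tilde r : \alpha \in [-1,1]\}$ without changing any of the asymptotic error quantities.
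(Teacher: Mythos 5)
Your proof is correct and follows essentially the same route as the paper's: both verify Asymptotic Realizability by exhibiting a singleton observable reward moment that converges to the true reward indicator $\mathbf{1}[a=c]$ --- yours via the on-policy posterior $p_{\textrm{on}}(c=a\mid h_t)$, the paper's via the indicator of the arm maximizing the empirical success rate $n_k^+/n_k$ --- using a Hoeffding/concentration argument that requires $\epsilon_{obs}\neq 1/2$ and every arm being pulled infinitely often, and then invoking the main theorems. The one substantive difference is where exploration comes from: the paper explicitly assumes an $\epsilon_{exp}$ perturbation on the learner's executed actions so that \emph{every} policy explores (which is what the supremum over $\Pi\cup\{\pi^E\}$ in Assumption~\ref{asmp:id} actually demands), whereas you establish infinite exploration only for the particular iterate $\pi^*$ and for $\pi^E$, leaving the identifiability condition unverified for non-exploratory members of $\Pi$ --- a small gap you should close by adopting the paper's exploration-noise assumption.
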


Putting it all together, for asymptotically realizable problems, on-policy imitation learning algorithms have stronger guarantees than their off-policy counterparts with respect to asymptotic value equivalence. If a contextual MDP satisfies these conditions (like with our Causal Bandit Problem), the preceding results apply. \added{We see that our theory matches our experiments: on-policy algorithms appear to work on all identifiable instances of the causal bandit problems, while off-policy algorithms have much weaker performance.} We now turn our attention to more complex CMDPs that satisfy our assumptions to further validate our theory empirically.

\section{Experiments}
\begin{figure}
    \centering
    \begin{subfigure}[b]{0.24\textwidth}
    \includegraphics[width=\textwidth]{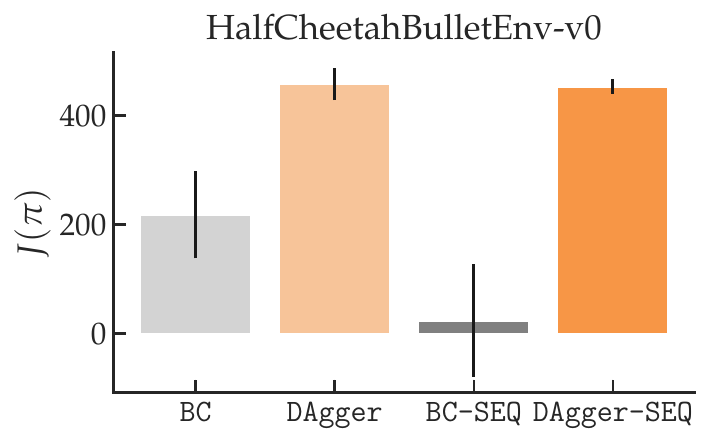}
    \vspace*{-2mm}
    \end{subfigure}
    \includegraphics[width=0.24\textwidth]{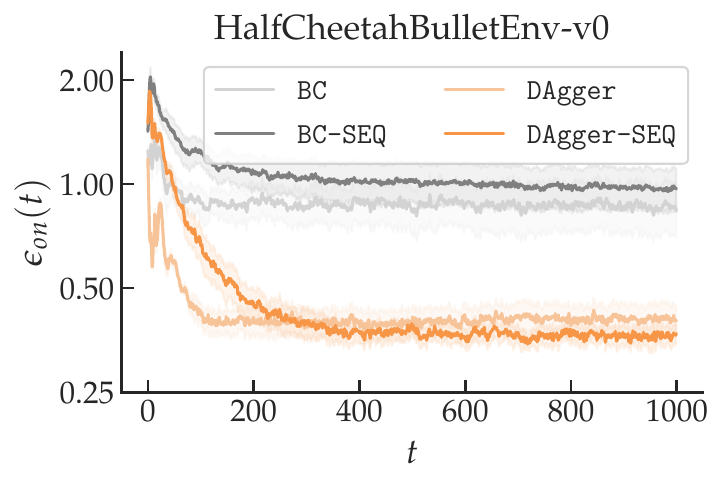}
    \begin{subfigure}[b]{0.24\textwidth}
    \includegraphics[width=\textwidth]{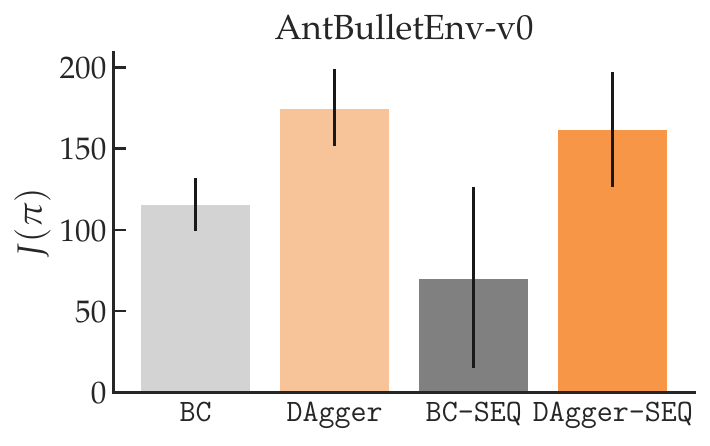}
    \vspace*{-2mm}
    \end{subfigure}
    \includegraphics[width=0.24\textwidth]{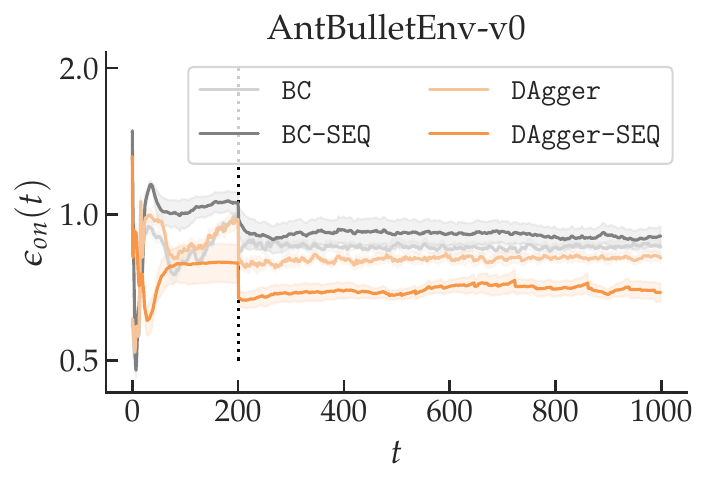}
    \caption{We use the suffix \texttt{-SEQ} to refer to models that have access to history (of length 5 for all experiments). Standard errors are computed across 4 runs. \textbf{Left:} We consider a modification of the HalfCheetah task where the goal is for the agent to run at a particular velocity. The expert sees this velocity while the learner observes an indicator of whether their current velocity is above or below the target, achieving $J(\pi_E) = 560$. We see that adding history to \texttt{BC} actually \textit{reduces} the performance of the learned policies, in contrast, to \texttt{DAgger}. We also see that \texttt{DAgger-SEQ} eventually out-perform \texttt{DAgger} in terms of moment-matching error. \textbf{Right:} We consider a modification of the Ant task where the target velocity is only revealed to the learner at $t=200$. The expert achieves $J(\pi_E) = 300$. We again see using sequence models harms \texttt{BC} performance while reducing \texttt{DAgger} moment-matching error. While all methods drop in error at $t=200$, the drop is particularly large for the on-policy methods, indicating that they are better able to manage uncertainty over the context.}
    \label{fig:hc}
\end{figure}

We conduct experiments in a CMDP extension of the standard PyBullet tasks \cite{coumans2019} that is inspired by the multi-task reinforcement learning setups of \citet{finn2017}. In these tasks, the agent is rewarded for running at a particular velocity that is randomly sampled at the beginning of each episode. We train expert policies that have access to this privileged information. We compare two algorithms: the off-policy behavioral cloning (\texttt{BC}) \citep{pomerleau1989alvinn} and the on-policy DAgger (\texttt{DAgger}) \citep{dagger} that either have access to the immediate state or the last five timesteps of history (for which we use the suffix \texttt{-SEQ}).

In the bar plots of \figref{fig:hc}, we see that without access to history, \texttt{BC} performs poorly. As our theory predicts, when we equip the \texttt{BC} learner with history, we actually see it perform \textit{worse} on average, exhibiting the latching behavior that has been observed repeatedly in practice. In contrast, we see that equipping our on-policy learner with access to the last few observations and actions does not lead to a sharp decline in terms of performance. We use MSE as a surrogate for moment-matching error. We see that on both environments, on-policy methods are far better at matching expert actions on their own rollout distribution. We also see that with enough time, \texttt{DAgger-SEQ} is able to predict actions better than \texttt{DAgger} (lower $\epsilon_{\mathrm{on}}(1000)$). Putting together these observations, it appears as though on-policy methods with access to history are the most robust against partial information, agreeing with our theory. We release our code at \textbf{\texttt{\url{https://github.com/gkswamy98/sequence_model_il}}}.


\section{Conclusion} We study asymptotically realizable imitation learning problems where the learner's ability to mimic the expert increases over the timesteps of the problem. We find that under identifiability and recoverability assumptions, on-policy algorithms are able to match time-averaged expert value, in contrast to off-policy algorithms which might pay for their early mistakes for the rest of the horizon. We demonstrate that empirically, off-policy algorithms are not able to take advantage of history to uncover missing information. We believe our results point towards a unified explanation of both empirical successes and failures.

\section{Acknowledgments}
ZSW is supported in part by the NSF FAI Award \#1939606, a Google Faculty Research Award, a J.P. Morgan Faculty Award, a Facebook Research Award, an Okawa Foundation Research Grant, and a Mozilla Research Grant. GS is supported computationally by a GPU award from NVIDIA and emotionally by his family and friends.


\clearpage

\bibliographystyle{plainnat}
\bibliography{refs}

\newpage

\appendix

\section{Proofs}
\label{app:proofs}
\begin{proof}[Proof of Theorem \ref{thm:ave}]
We proceed in cases. For concision, we write $f_t$ for $f(s_t, a_t, c)$ and $\tilde{f}_t$ for $\tilde{f}(h_t, a_t)$, where $(f, \tilde{f})$ are the pairs defined in Assumption \ref{asmp:id}.

\noindent \textbf{Online/Reward-matching.} By the definition of the value function, we can write that
\begin{align}
    \frac{1}{T}(J(\pi^E) - J(\pi)) &= \frac{1}{T} \sum_{t=1}^T \mathbb{E}_{\tau \sim \pi^E}[r(s_t, a_t, c)] - \mathbb{E}_{\tau \sim \pi}[r(s_t, a_t, c)] \\
    &\leq \sup_{(f, \tilde{f}) \in \mathcal{F}_r \times \tilde{\mathcal{F}}_r} \frac{1}{T} \sum_{t=1}^T \mathbb{E}_{\tau \sim \pi^E}[f_t - \tilde{f}_t + \tilde{f}_t] - \mathbb{E}_{\tau \sim \pi}[f_t - \tilde{f}_t + \tilde{f}_t] \\
    &\leq \frac{1}{T} \sum_{t=1}^T  \epsilon_{\mathrm{rew}}(t) + \sup_{(f, \tilde{f}) \in \mathcal{F}_r \times \tilde{\mathcal{F}}_r}  \frac{1}{T} \sum_{t=1}^T \mathbb{E}_{\tau \sim \pi^E}[f_t - \tilde{f}_t] - \mathbb{E}_{\tau \sim \pi}[f_t - \tilde{f}_t] \\
    &= \frac{1}{T} \sum_{t=1}^T  \epsilon_{\mathrm{rew}}(t) + \delta_{\mathrm{rew}}(t).
\end{align}
Note that via Assumption \ref{asmp:id},
\begin{equation}
   \lim_{T \to \infty} \frac{1}{T} \sum_{t=1}^T \delta_{\mathrm{rew}}(t) = 0.
\end{equation}
We therefore can drop the latter term from our bound. By the definition of the $\limsup$, we know that $\forall \epsilon >0$, $\exists T(\epsilon)$ s.t. $\forall t \geq T(\epsilon)$, $\epsilon_{\mathrm{rew}}(t) \leq \epsilon_{\mathrm{rew}} + \epsilon$. Let \begin{equation}
    S(\epsilon) = \sum_{t=1}^{T(\epsilon)} \epsilon_{\mathrm{rew}}(t)
\end{equation}
denote the prefix sum. Then, we know that $\forall T' \geq T(\epsilon)$,
\begin{equation}
    \sum_{t=1}^{T'} \epsilon_{\mathrm{rew}}(t) = S(\epsilon) +  \sum_{t=T(\epsilon)}^{T'} \epsilon_{\mathrm{rew}}(t) \leq  S(\epsilon) + (T' - T(\epsilon) + 1)(\epsilon_{\mathrm{rew}}(\infty) + \epsilon).
\end{equation}
Taking the average by dividing both sides by $T'$, we arrive at
\begin{equation}
        \frac{1}{T'}\sum_{t=1}^{T'} \epsilon_{\mathrm{rew}}(t) \leq  \frac{S(\epsilon)}{T'} + (1 - \frac{T(\epsilon) - 1}{T'})(\epsilon_{\mathrm{rew}}(\infty) + \epsilon).
\end{equation}
Taking $\lim_{T' \to \infty}$ tells us that averages converge to at most $\epsilon_{\mathrm{rew}}(\infty) + \epsilon$. Because this condition holds for all $\epsilon > 0$, we can take the $\lim_{\epsilon \to 0}$ to prove that 
\begin{equation}
    \lim_{T' \to \infty} \frac{1}{T'}(J(\pi^E) - J(\pi)) \leq \epsilon_{\mathrm{rew}}(\infty).
\end{equation}

\noindent \textbf{Interactive/On-$\mathbf{Q}$.} We proceed similarly to the previous case. Via the Performance Difference Lemma \citep{kakade2002approximately}, we can write that
\begin{align}
    \frac{1}{T}(J(\pi^E) - J(\pi)) &= \frac{1}{T} \sum_{t=1}^T \mathbb{E}_{\tau \sim \pi}[Q^{\pi^E}(s_t, a_t, c) - \mathbb{E}_{a \sim \pi^E}[Q^{\pi^E}(s_t, a, c)]] \\
    &\leq \sup_{(f, \tilde{f}) \in \mathcal{F}_{Q_E} \times \tilde{\mathcal{F}}_{Q_E}} \frac{1}{T} \sum_{t=1}^T \mathbb{E}_{\tau \sim \pi}[f_t - \tilde{f}_t + \tilde{f}_t - \mathbb{E}_{a \sim \pi^E}[f_t - \tilde{f}_t + \tilde{f}_t]] \\
    &\leq \frac{H}{T} \sum_{t=1}^T  \epsilon_{\mathrm{on}}(t) + \sup_{(f, \tilde{f}) \in \mathcal{F}_{\mathrm{on}} \times \tilde{\mathcal{F}}_{\mathrm{on}}} \frac{H}{T} \sum_{t=1}^T \mathbb{E}_{\tau \sim \pi}[f_t - \tilde{f}_t - \mathbb{E}_{a\sim \pi^E}[f_t - \tilde{f}_t]] \nonumber \\
    &= \frac{H}{T} \sum_{t=1}^T  \epsilon_{\mathrm{on}}(t) + \delta_{\mathrm{on}}(t).
\end{align}
The $H$ factor comes from the scaling of $\mathcal{F}_{\mathrm{on}} = \{f / 2H: f \in \mathcal{F}_{Q_E} \}$. As before, via Assumption \ref{asmp:id},
\begin{equation}
   \lim_{T \to \infty} \frac{1}{T} \sum_{t=1}^T \delta_{\mathrm{on}}(t) = 0.
\end{equation}
By the definition of the $\limsup$, we know that $\forall \epsilon >0$, $\exists T(\epsilon)$ s.t. $\forall t \geq T(\epsilon)$, $\epsilon_{\mathrm{on}}(t) \leq \epsilon_{\mathrm{on}} + \epsilon$. Let \begin{equation}
    S(\epsilon) = \sum_{t=1}^{T(\epsilon)} \epsilon_{\mathrm{on}}(t)
\end{equation}
denote the prefix sum. Then, we know that $\forall T' \geq T(\epsilon)$,
\begin{equation}
    \sum_{t=1}^{T'} \epsilon_{\mathrm{on}}(t) = S(\epsilon) +  \sum_{t=T(\epsilon)}^{T'} \epsilon_{\mathrm{on}}(t) \leq  S(\epsilon) + (T' - T(\epsilon) + 1)(\epsilon_{\mathrm{on}}(\infty) + \epsilon).
\end{equation}
Taking the average by dividing both sides by $T'$, we arrive at
\begin{equation}
        \frac{1}{T'}\sum_{t=1}^{T'} \epsilon_{\mathrm{on}}(t) \leq  \frac{S(\epsilon)}{T'} + (1 - \frac{T(\epsilon) - 1}{T'})(\epsilon_{\mathrm{on}}(\infty) + \epsilon).
\end{equation}
Taking $\lim_{T' \to \infty}$ tells us that averages converge to at most $\epsilon_{\mathrm{on}}(\infty) + \epsilon$. Because this condition holds for all $\epsilon > 0$, we can take the $\lim_{\epsilon \to 0}$ to prove that 
\begin{equation}
    \lim_{T' \to \infty} \frac{1}{T'}(J(\pi^E) - J(\pi)) \leq H\epsilon_{\mathrm{on}}(\infty).
\end{equation}

\noindent \textbf{Offline/Off-$\mathbf{Q}$.} Via the Performance Difference Lemma \citep{kakade2002approximately}, we can write that
\begin{align}
    \frac{1}{T}(J(\pi^E) - J(\pi)) &= \frac{1}{T} \sum_{t=1}^T \mathbb{E}_{\tau \sim \pi^E}[Q^{\pi}(s_t, a_t, c) - \mathbb{E}_{a \sim \pi^E}[Q^{\pi}(s_t, a, c)]] \\
    &\leq \sup_{(f, \tilde{f}) \in \mathcal{F}_{Q} \times \tilde{\mathcal{F}}_{Q}} \frac{1}{T} \sum_{t=1}^T \mathbb{E}_{\tau \sim \pi^E}[f_t - \tilde{f}_t + \tilde{f}_t - \mathbb{E}_{a \sim \pi}[f_t - \tilde{f}_t + \tilde{f}_t]] \\
    &\leq \frac{T}{T} \sum_{t=1}^T  \epsilon_{\mathrm{off}}(t) + \sup_{(f, \tilde{f}) \in \mathcal{F}_{\mathrm{off}} \times \tilde{\mathcal{F}}_{\mathrm{off}}} \frac{T}{T} \sum_{t=1}^T \mathbb{E}_{\tau \sim \pi^E}[f_t - \tilde{f}_t - \mathbb{E}_{a\sim \pi}[f_t - \tilde{f}_t]] \nonumber \\
    &= \sum_{t=1}^T  \epsilon_{\mathrm{off}}(t) + \delta_{\mathrm{off}}(t).
\end{align}
The $T$ factor comes from the scaling of $\mathcal{F}_{\mathrm{off}} = \{f / 2T: f \in \mathcal{F}_{Q} \}$. 
Thus, we can write that 
\begin{equation}
   \lim_{T \to \infty} \sum_{t=1}^T \epsilon_{\mathrm{off}}(t) + \delta_{\mathrm{off}}(t) = \Sigma_{\mathrm{off}}(\infty),
\end{equation}
which implies that
\begin{equation}
    \lim_{T \to \infty} \frac{1}{T}(J(\pi^E) - J(\pi)) \leq \Sigma_{\mathrm{off}}(\infty).
\end{equation}
\end{proof}

\begin{proof}[Proof of Theorem \ref{thm:offq}] Consider the Cliff problem of \citet{swamy2021moments}. There is no hidden context in this problem so $\delta_{\mathrm{off}}(t) = 0$. Let the learner take the action that puts them at the bottom of the cliff at timestep $t$ with probability $\frac{1}{t+1}$, giving us $\epsilon_{\mathrm{off}}(t) = \frac{1}{t+1}$. Note that $\epsilon_{\mathrm{off}}(t)$ decays to 0 but $\Sigma_{\mathrm{off}}(t)$ does not as the harmonic series diverges. Once the learner falls off the cliff, they recieve no reward for the rest of the horizon. This means that
\begin{equation}
     \frac{1}{T}(J(\pi^E) - J(\pi)) = \frac{1}{T} \sum_{t=1}^T \frac{T-t}{t+1} = \sum_{t=1}^T \frac{1}{t+1} (1 - \frac{t}{T}) = \sum_{t=1}^T \epsilon_{\mathrm{off}}(t) (1 - \frac{t}{T}).
\end{equation}
The limit of the sum of the first term is $\Sigma_{\mathrm{off}}(\infty)$. For the second term, 
\begin{equation}
    \lim_{T \to \infty} \frac{1}{T} \sum_{t=1}^T \frac{t}{t+1} = 1.
\end{equation}
Thus,
\begin{equation}
    \lim_{T \to \infty} \frac{1}{T}(J(\pi^E) - J(\pi)) = \Sigma_{\mathrm{off}}(\infty) - 1 \gtrsim \Sigma_{\mathrm{off}}(\infty).
\end{equation}
\end{proof}

\begin{proof}[Proof of Theorem \ref{thm:on_policy}] Define the infinite-horizon payoffs \footnote{When this limit exists, the average over timesteps of moment-matching error is equal to it.}
 for our moment-matching games as follows:
\begin{equation}
    U_{\mathrm{rew}}(\pi, f) = \lim_{T \to \infty} \mathbb{E}_{\tau \sim \pi}[f(h_T, a_T)] - \mathbb{E}_{\tau \sim \pi^E}[f(h_T, a_T)],
\end{equation}
\begin{equation}
    U_{\mathrm{on}}(\pi, f) = \lim_{T \to \infty} \mathbb{E}_{\tau \sim \pi}[f(h_T, a_T)] - \mathbb{E}_{\tau \sim \pi, a \sim \pi_E}[f(h_T, a)].
\end{equation}

Note that under Asymptotic Realizability (Assumption \ref{asmp:id}), there exists a policy $\pi \in \Pi$ s.t. $\forall f \in \tilde{\mathcal{F}}$, $U_{\mathrm{rew}}(\pi, f) = 0$ and $U_{\mathrm{on}}(\pi, f) = 0$. 

Let $\pi_{\mathrm{rew}}$ and $\pi_{\mathrm{on}}$ denote $\epsilon$-approximate Nash equilibrium strategies for the above two games (which could be computed by, say, running a no-regret algorithm over $\Pi$ against a no-regret or best-response counterpart for the $f$ player). By the definition of an approximate Nash equilibrium, we know that
\begin{equation}
    \sup_{f \in \tilde{\mathcal{F}}_r} U_{\mathrm{rew}}(\pi_{\mathrm{rew}}, f) - \epsilon \leq \inf_{\pi \in \Pi} U_{\mathrm{rew}}(\pi, f) = 0,
\end{equation}
where the last step comes from our realizability assumption. This implies that
\begin{equation}
    \sup_{f \in \tilde{\mathcal{F}}_r} U_{\mathrm{rew}}(\pi_{\mathrm{rew}}, f) = \epsilon_{\mathrm{rew}}(\infty) \leq \epsilon.
\end{equation}
Similarly, we can write that
\begin{equation}
    \sup_{f \in \tilde{\mathcal{F}}_{\mathrm{on}}} U_{\mathrm{on}}(\pi_{\mathrm{on}}, f) = \epsilon_{\mathrm{on}}(\infty) \leq \epsilon.
\end{equation}
Plugging these expressions into Theorem \ref{thm:ave} gives us the desired results.

\end{proof}

\begin{proof}[Proof of Corollary \ref{corr:cb}] Assume the learner is subject to an $\epsilon_{exp} > 0$ probability of playing a different action than intended (either as part of the dynamics or because of explicit exploration noise). Consider the following function:
\begin{equation}
    \tilde{f}(h_t, a_t) = \mathbf{1}[a_t = \max_k^K \frac{n_k^+}{n_k}],
\end{equation}
where $n_k$ refers to the total number of pulls of arm $k$ and $n_k^+$ refers to the number of pulls of arm $k$ that elicit positive feedback. We proceed by arguing that this function will converge to the reward function of the problem. We specialize on the two-arm case as it is the most difficult for the learner. W.l.o.g., let arm $1$ be the correct arm. Note that $r_1 = \frac{n_1^+}{n_1}$ and $r_2 = \frac{n_2^+}{n_2}$ are both averages of Bernoulli coin flips. Thus, via a Hoeffding bound, we know that
\begin{align}
    P(r_2 \geq r_1) &= P(r_2 - \mathbb{E}[r_2] \geq r_1 - \mathbb{E}[r_2])\\
    &= P(r_2 - \epsilon_{obs} \geq r_1 - \epsilon_{obs}) \\
    &\leq \exp{(\frac{-2(r_1 - \epsilon_{obs})^2}{n_2})} = \delta(t).
\end{align}
Given that $(r_1 - \epsilon_{obs})^2$ is bounded and w.h.p. not equal to 0, we can say that $\lim_{t \to \infty}\delta(t)=0$ as $\lim_{t \to \infty}n_2=\infty$ because of the exploration noise / dynamics. Thus, we know that eventually, $r_1 < r_2$, which implies that $\tilde{f}(h_t, a_t) = \mathbf{1}[a_t = 1]$, which is the reward function of the problem. This means that we are asymptotically reward-moment identifiable for the minimal reward-moment class, $\mathcal{F}_{r} = \{r\}$. As we made no restrictions on the action distribution for this problem, this means the problem is trivially realizable. Thus, by Theorem \ref{thm:ave}, matching this moment in an on-policy fashion is sufficient to achieve AVE.

\end{proof}

\section{Experiments}
\subsection{Causal Bandit Experiments} The results we present are with $K=5$ and after $T=2000$ timesteps averaged across 100 trials. We add explicit exploration noise in the form of an $\epsilon_{exp}$ chance of playing an arm other than the one the learner chose. We start off all learners with a uniform prior and check and see if at $t=T$ whether they pick the correct arm with probability at least $\epsilon_{exp} - 0.12$. If so, we add a green dot. Otherwise, we add a red dot. We refer interested readers to our code for the precise expressions we used but, roughly speaking, we perform Bayesian filering with or without treating the actions as evidence. As argued above, this corresponds to assuming the on-policy or off-policy graphical models of \figref{fig:scms}.

\subsection{PyBullet Experiments} 
We give the off-policy learners 25 demonstration trajectories, each of length 1000. As described above, our non-sequential models are MLPs with two hidden layers of size 256 and ReLu activations. Our sequential models are LSTMs with hidden size 256 followed by an MLP with one hidden layer of size 256. We use a history of length 5 for all experiments and train all learners with a MSE loss and an Adam optimizer \citep{kingma2014adam} with learning rate $3e-4$. Our sequence models are given access to the last $5$ states and the last $4$ actions and are asked to predict the next action. We evaluate MSE and $J(\pi)$ by rolling out 100 trajectories and averaging.

\noindent\textbf{HalfCheetah Experiments.} As in \citet{finn2017}, we sample a target velocity for the agent from $U[0, 3]$, which is passed in as part of the state to the expert but hidden from the learner. We train an expert for this task via Soft Actor Critic (SAC) \citep{sac} -- we refer interested readers to our code for precise hyperparameters. The reward function we train the expert and evaluate learner policies with is
\begin{equation}
   1 -|\dot{x_t} - c| - 0.05 ||u_t||_2^2,
\end{equation}
where $c$ is the target velocity. We run behavioral cloning for $1e5$ steps. For DAgger \citep{ross2010efficient}, we train for $5e4$ steps on the same set of $25$ trajectories as were given to the off-policy learners and then perform 9 iterations of rollouts/aggregation/refitting, sampling 20 trajectories and training for $5e3$ steps. Thus, both DAgger and BC are given the same compute budget -- the only difference is the data that is passed in. 

\noindent\textbf{Ant Experiments.} 
We sample a target velocity for the agent from $U[0, 1.5]$ and mask it for the first 200 timesteps and then reveal it to the learner. We train the expert policy using reward function
\begin{equation}
   1 -|\dot{x_t} - c| - 0.5 ||u_t||_2^2,
\end{equation}
where $c$ is the target velocity. We filter demonstrations to only include expert trajectories that have at least 500 environment steps. We use the same model classes as for HalfCheetah but add in dropout to the input with $p=0.5$ for the sequence models as it helps uniformly. We run behavioral cloning for $1e5$ steps. For DAgger \citep{ross2010efficient}, we train for $1e4$ steps on the same set of $25$ trajectories as were given to the off-policy learners and then perform 9 iterations of rollouts/aggregation/refitting, sampling 25 trajectories and training for $1e4$ steps. Thus, both DAgger and BC are given the same compute budget -- the only difference is the data that is passed in. 
\end{document}